\theoremstyle{plain}
\newtheorem{theorem}{Theorem}[section]
\newtheorem{proposition}[theorem]{Proposition}
\theoremstyle{definition}
\theoremstyle{remark}
\newtheorem{remark}[theorem]{Remark}
\title{WFR-MFM: One-Step Inference for Dynamic Unbalanced Optimal Transport}
\date{}
\author{
Xinyu Wang$^{1,\dag}$,\;\;
Ruoyu Wang$^{2,\dag}$,\;\;
Qiangwei Peng$^{1,\dag}$,\;\;
Peijie Zhou$^{2,3,4,5,*}$,\;\;
Tiejun Li$^{1,2,4,6,*}$\\[0.5ex]
\footnotesize
$^1$ LMAM and School of Mathematical Sciences, Peking University. \\
\footnotesize
$^2$ Center for Machine Learning Research, Peking University. \\
\footnotesize
$^3$ Center for Quantitative Biology, Peking University. \\
\footnotesize
$^4$ National Engineering Laboratory for Big Data Analysis and Applications. \\
\footnotesize
$^5$ AI for Science Institute. \\
\footnotesize
$^6$ Center for Data Science, Peking University. \\
\footnotesize
$^\dag$ These authors contribute equally to this work. \\
\footnotesize
$^*$ Correspondence: pjzhou@pku.edu.cn; tieli@pku.edu.cn
}
\begin{document}
\twocolumn[
  \begin{@twocolumnfalse}

\maketitle
\begin{abstract}
Reconstructing dynamical evolution from limited observations is a fundamental challenge in single-cell biology, where dynamic unbalanced optimal transport (OT) provides a principled framework for modeling coupled transport and mass variation.
However, existing approaches rely on trajectory simulation at inference time, making inference a key bottleneck for scalable applications.
In this work, we propose a mean-flow framework for unbalanced flow matching that summarizes both transport and mass-growth dynamics over arbitrary time intervals using mean velocity and mass-growth fields, enabling fast one-step generation without trajectory simulation. 
To solve dynamic unbalanced OT under the Wasserstein--Fisher--Rao geometry, we further build on this framework to develop \textbf{Wasserstein--Fisher--Rao Mean Flow Matching (WFR-MFM)}.
Across synthetic and real single-cell RNA sequencing datasets, WFR-MFM achieves orders-of-magnitude faster inference than a range of existing baselines while maintaining high predictive accuracy, and enables efficient perturbation response prediction on large synthetic datasets with thousands of conditions.
\end{abstract}
\vspace{0.35cm}

  \end{@twocolumnfalse} 
] 



\section{Introduction}
Reconstructing dynamical evolution from limited observations is a fundamental problem across many scientific domains \citep{chen2018neural}. The central challenge lies in inferring a plausible evolution that connects these sparse observations. This challenge is particularly evident in single-cell RNA sequencing (scRNA-seq), where destructive measurements yield independent snapshots at discrete times rather than cell trajectories \citep{zheng2017massively,chen2022spatiotemporal}. Therefore, trajectory inference in scRNA-seq aims to recover the underlying cellular dynamics from such sparse snapshots \citep{waddingot,trajectorynet}. 
A common approach to modeling latent continuous-time dynamics is through neural ordinary differential equations (ODEs) \citep{chen2018neural}, which parameterize system evolution via expressive, learnable vector fields capable of representing complex nonlinear dynamics. 
Despite their flexibility, these models rely on trajectory-level ODE simulation, leading to two major challenges: (i) high computational cost and potential instabilities during \textbf{training} \citep{yan2019robustness}, and (ii) substantial computational overhead at \textbf{inference} due to iterative trajectory integration \citep{song2023consistency,frans2025one,kim2024consistency}.

To alleviate the challenges posed by ODE simulation, recent studies have proposed alternative approaches. 
Flow Matching (FM) \citep{cfm_lipman,liu2022flow,pooladian2023multisample,albergo2022building} reformulates the learning objective to enable simulation-free training by directly learning the velocity field governing an ODE through supervised regression.  
While FM enables simulation-free training, inference remains trajectory-based and still requires ODE simulation.
Some studies \citep{yang2025consistency, peng2025flow} introduce additional consistency constraints when learning instantaneous velocity fields to accelerate inference, while MeanFlow \citep{geng2025mean} parameterizes the mean velocity field over time intervals to enable direct inference within the FM framework without trajectory simulation. 
Such approaches enable fast inference but are primarily formulated under mass-preserving assumptions.

However, in many real-world systems this assumption does not hold, as exemplified by single-cell dynamics involving proliferation and apoptosis.
Several neural ODE-based models have been successfully applied to capture such unbalanced dynamics \citep{TIGON,peng2024stvcr,DeepRUOT,sun2025variational}. 
However, such unbalanced formulations are more computationally demanding than their mass-preserving counterparts, as neural ODE implementations must simulate both transport and mass-growth dynamics during both training and inference.
This has motivated recent work on unbalanced FM, most of which focus on regressing transport velocity fields while neglecting explicit modeling of growth dynamics \citep{eyring2024unbalancedness,corso2025composing,cao2025taming}. 
A principled way to jointly model transport and mass variation is provided by dynamic unbalanced optimal transport (OT), with the Wasserstein--Fisher--Rao (WFR) geometry coupling transport with mass creation and annihilation \citep{liero2016optimal,chizat2018interpolating,kondratyev2016}.
Building on this framework, WFR-FM \citep{peng2026wfrfm} integrates unbalanced OT and FM under the WFR geometry, jointly learning both a transport vector field and a growth-rate function with simulation-free training.
Nevertheless, inference in existing unbalanced FM methods, including WFR-FM, remains trajectory-based and still requires numerical ODE simulation, leaving the inference challenge for unbalanced dynamics largely unresolved.

Efficient inference for unbalanced dynamics has become a central bottleneck in emerging applications such as large-scale single-cell perturbation prediction.
Recent studies have increasingly focused on predicting cellular responses to perturbations from single-cell data at scale \citep{roohani2024predicting,lotfollahi2019scgen,zinati2024groundgan,klein2025cellflow}.
Single-cell perturbations inherently induce unbalanced dynamics, with large changes in cell abundance before and after perturbation, motivating dynamic unbalanced modeling.
Crucially,  the space of perturbation conditions expands combinatorially across modality, dosage, timing, and experimental context such as cell line, tissue, and batch effects \citep{klein2025cellflow}.
Together, these factors make ODE-based inference impractical for perturbation prediction, making fast inference a necessity.

To address this remaining challenge, we introduce a mean-flow framework for unbalanced FM that enables fast one-step generation for mass-varying dynamics by directly advancing system states over finite time intervals, without relying on trajectory-level ODE simulation.
To solve dynamic unbalanced OT under the WFR geometry, we further develop \textbf{Wasserstein--Fisher--Rao Mean Flow Matching (WFR-MFM)} based on the proposed mean-flow framework, yielding a mean-flow matching algorithm with simulation-free training and inference.
We validate the effectiveness of WFR-MFM on a range of synthetic and real scRNA-seq datasets, including a large-scale synthetic perturbation dataset, with systematic comparisons to representative existing methods \citep{TIGON,DeepRUOT,mioflow,sun2025variational,sflowmatch,kapusniak2024metric,rohbeck2025modeling,eyring2024unbalancedness, wang2025joint,peng2026wfrfm}. Our contributions can be summarized as follows:
\begin{itemize}
\item We formulate a mean-flow framework for unbalanced FM, which summarizes transport and mass-growth dynamics over arbitrary time intervals through mean velocity and mass-growth fields. This formulation replaces trajectory-based inference with direct state-to-state updates, enabling rapid one-step generation without simulating continuous-time dynamics.
\item We propose \textbf{WFR-MFM}, a mean flow matching algorithm to approximate dynamic unbalanced OT under the WFR geometry. By learning mean velocity and mass-growth fields, the algorithm enables direct one-step generation along WFR geodesics.
\item We evaluate WFR-MFM using synthetic and real scRNA-seq datasets, comparing its inference efficiency and accuracy with existing methods. Furthermore, we apply WFR-MFM to a large-scale synthetic perturbation dataset to reveal its potential for inference in perturbation response prediction in single-cell biology.
\end{itemize}

\section{Related Work}
\paragraph{Flow Matching.}
Flow matching \citep{cfm_lipman,liu2022flow,pooladian2023multisample,albergo2022building} provides a simulation-free framework for learning continuous velocity fields through conditional probability paths in generative modeling, avoiding  ODE-based path simulation during training. 
The framework has been extended to encompass OT couplings \citep{cfm_tong,klein2024genot}, conditional generation \citep{rohbeck2025modeling}, multi-marginal formulations \citep{rohbeck2025modeling,lee2025mmsfm}, stochastic dynamics \citep{sflowmatch,lee2025mmsfm}, and other generalizations \citep{kapusniak2024metric,zhang2024trajectory,meta_flow_matching,petrovic2025curly}. 
However, most methods assume normalized distributions and thus mass conservation.
In biological systems with proliferation and apoptosis, dynamics are inherently mass-varying, motivating recent work on unbalanced FM \citep{eyring2024unbalancedness,cao2025taming,corso2025composing,wang2025joint,peng2026wfrfm, song2025reweighted}. Among these, WFR-FM \citep{peng2026wfrfm} uniquely provides an OT-consistent formulation that strictly recovers dynamic unbalanced OT under the WFR geometry.

\paragraph{Few-Step Flow Models.}
Few-step generation has been widely explored in diffusion models via consistency-based objectives \citep{song2023consistency}, and similar efficiency concerns have recently driven few-step flow models in FM.
To reduce the inference overhead caused by ODE-based trajectory simulation in FM, few-step or one-step flow models have been proposed for fast generation. The first class of methods \citep{yang2025consistency, peng2025flow, frans2025one} still learns the instantaneous velocity field and accelerates sampling via additional consistency constraints. For example, Consistency-FM \citep{yang2025consistency} enforces self-consistency of the velocity field with a multi-segment training strategy.
More recently, MeanFlow \citep{geng2025mean} departs from instantaneous dynamics by parameterizing velocity fields averaged over finite time intervals, enabling direct one-step inference within the FM framework. Subsequent works propose various refinements to the MeanFlow framework \citep{guo2025splitmeanflow,zhang2025alphaflow,geng2025improved}. 
Beyond averaged velocity parameterizations, closely related solution-based formulations target one-step generation from scratch by learning the solution function of the velocity ODE, as exemplified by SoFlow \citep{luo2025soflow}. However, these approaches are fundamentally built on mass-preserving assumptions and are not applicable to unbalanced dynamics.

\paragraph{Unbalanced Optimal Transport and Dynamic Models of scRNA-seq Data.}
Unbalanced optimal transport generalizes classical OT \citep{kantorovich1942translocation,benamou2000computational} by allowing mass variation, which is essential for modeling biological systems involving proliferation and apoptosis. It relaxes the hard marginal constraints of classical OT \citep{benamou2003numerical,figalli2010optimal,caffarelli2010free} by penalizing deviations of transported mass through suitable divergence terms. The WFR metric \citep{liero2016optimal,chizat2018interpolating,kondratyev2016} provides a dynamic formulation of unbalanced OT that jointly models transport and mass variation through coupled velocity and mass-growth fields. Recently, deep learning solvers for dynamic WFR have been developed to characterize continuous-time single-cell dynamics \citep{action_matching,tong_action,TIGON,peng2024stvcr,DeepRUOT,sun2025variational,peng2026wfrfm}. Despite providing OT-consistent descriptions of unbalanced dynamics, existing methods typically rely on trajectory-based simulation at inference time rather than direct one-step generation, leading to substantial computational overhead at scale.
Motivated by this limitation, we develop \textbf{WFR-MFM} under the WFR geometry as an efficient method for inference in dynamic unbalanced optimal transport.

\section{Mathematical Background}
\paragraph{Setup.} We consider two endpoint measures $\mu_0$ and $\mu_1$ supported on a domain $\mathcal{X} \subset \mathbb{R}^d$, with density functions $\mu_0(\bm{x})$ at $t = 0$ and $\mu_1(\bm{x})$ at $t = 1$. Let $\mathcal{M}_+(\mathcal{X})$ denote the space of finite, absolutely continuous nonnegative measures on $\mathcal{X}$. The total masses of $\mu_0$ and $\mu_1$ are not required to match.

\paragraph{Flow Matching for Probability Flow.}
FM \citep{cfm_lipman,liu2022flow,pooladian2023multisample,albergo2022building} provides a general framework for learning continuous-time dynamics of measures by regressing parameterized vector fields to analytically specified targets.
In the mass-preserving setting, FM models a time-dependent probability path $\{\rho_t\}_{t\in[0,1]}$ with velocity field $\bm{u}_t$ satisfying the continuity equation
\begin{equation*}
\partial_t \rho_t + \nabla \!\cdot (\rho_t \bm{u}_t) = 0,
\end{equation*}
which characterizes deterministic measure flows induced by transport. If a kinetic energy minimization over admissible transport paths is imposed, the formulation reduces to the dynamic OT problem \citep{benamou2000computational}.
FM learns such dynamics by regressing a parameterized vector field $\bm{u}_\theta(\bm{x},t)$ to the true field, avoiding backpropagation through ODE solvers:
\begin{equation*}
    \mathcal{L}_{\mathrm{FM}}(\theta)
    = \mathbb{E}_{t,\bm{x} \sim \rho_t}
      \big[\|\bm{u}_\theta(\bm{x},t) - \bm{u}_t(\bm{x})\|_2^2\big].    
\end{equation*}
To obtain a tractable training objective, FM constructs conditional paths $\rho_t(\cdot\mid\bm{z})$ admitting closed-form vector fields $\bm{u}_t(\cdot\mid\bm{z})$, and defines the regression objective
\begin{equation*}
    \mathcal{L}_{\mathrm{CFM}}(\theta)
    = \mathbb{E}_{t,\bm{z},\bm{x} \sim \rho_t(\cdot\mid \bm{z})}
    \big[\|\bm{u}_\theta(\bm{x},t) - \bm{u}_t(\bm{x}\mid\bm{z})\|_2^2\big].
\end{equation*}
When the conditional paths and coupling distribution are chosen to match the classical OT formulation, the learned dynamics recover the geodesics of the dynamic OT problem \citep{cfm_tong}.

\paragraph{Variable mass flow.} Consider a measure path : $[0,1] \times \mathbb{R}^d \to \mathbb{R}_+$, together with a velocity field $\bm{u}_t : [0,1] \times \mathbb{R}^d \to \mathbb{R}^d$ and a growth-rate function $g_t : [0,1] \times \mathbb{R}^d \to \mathbb{R}$. Assuming sufficient regularity, $(\bm{u}_t, g_t)$ induces a unique time-dependent flow $\bm{\phi}_t : [0,1] \times \mathbb{R}^d \to \mathbb{R}^d$, $m_t : [0,1] \times \mathbb{R}^d \to \mathbb{R}$ defined via an ODE:   
\begin{equation*}\label{eq:flow}
\begin{gathered}
	\frac{d}{dt}
\begin{pmatrix}
\bm{\phi}_t(\bm{x}) \\[4pt]
\ln m_t(\bm{x})
\end{pmatrix}
=
\begin{pmatrix}
\bm{u}_t(\bm{\phi}_t(\bm{x})) \\[4pt]
g_t(\bm{\phi}_t(\bm{x}))
\end{pmatrix},\\
\bm{\phi}_0(\bm{x}) = \bm{x},\;\;\; m_0(\bm{x})\; \text{given},
\end{gathered}
\end{equation*}
which defines a push-forward mapping with changing mass $(\bm{\phi}_t,m_t)_{\#} : \mathcal{M_+}(\mathbb{R}^d) \to \mathcal{M}_+(\mathbb{R}^d)$ for each $t\in [0,1]$ such that
\begin{equation*}
\begin{aligned}
\rho_t(\bm{x}) &\ = (\bm{\phi}_t,m_t)_{\#} \rho_0(\bm{x}) \\
&:= \rho_0(\bm{\phi}_t^{-1}(\bm{x})) \left| \det \nabla_{\bm{x}} \bm{\phi}_t^{-1}(\bm{x}) \right|
\frac{m_t(\bm{x})}{m_0(\bm{\phi}_t^{-1}(\bm{x}))}	
\end{aligned}
\end{equation*}
and satisfies the continuity equation with source term
\begin{equation*}
\label{marginal continuity equation with source term}
\partial_t\rho_t(\bm{x})+\nabla_{\bm{x}} \cdot (\bm{u}_t(\bm{x})\rho_t(\bm{x})) = g_t(\bm{x})\rho_t(\bm{x}). 
\end{equation*}
Furthermore, imposing a variational principle penalizing transport and mass variation yields dynamic unbalanced  OT problems. A prominent example is the WFR metric \citep{chizat2018interpolating,chizat2018unbalanced,liero2018optimal}, which provides a principled formulation for the dynamic evolution $\{\rho_t\}_{t \in [0,1]}$ from $\mu_0$ to $\mu_1$ while jointly modeling transport and mass variation:
\begin{equation}\label{eq:wfr_dynamic}
\begin{gathered}
\begin{aligned}
\mathrm{WFR}_{\delta}^2(\mu_0,\mu_1)
=\ &\inf_{\rho,\bm{u},g} \int_0^1\!\int_{\mathcal{X}}
\tfrac{1}{2}\big(\|\bm{u}(\bm{x},t)\|_2^2 \\
&+ \delta^2 \|g(\bm{x},t)\|_2^2\big)
\rho_t(\bm{x})\,\mathrm{d}\bm{x}\mathrm{d}t,	
\end{aligned}\\
\text{s.t. }\;
\partial_t\rho+\nabla_{\bm{x}}\!\cdot(\rho \bm{u}) = \rho g,\;\;\;
\rho_0 = \mu_0,\ \rho_1 = \mu_1.
\end{gathered}
\end{equation}

\paragraph{Unbalanced Flow Matching.}
Unbalanced FM extends FM to variable-mass dynamics by jointly learning a velocity field $\bm{u}_t$ and a growth-rate function $g_t$ via neural networks $\bm{u}_{\bm{\theta}}(\bm{x},t)$ and $g_{\bm{\phi}}(\bm{x},t)$ trained under the unbalanced FM objective
\begin{equation*}
\begin{aligned}
\mathcal{L}_{\mathrm{UFM}}(\bm{\theta},\bm{\phi})
= \ &\mathbb{E}_{t,\bm{x} \sim \rho_t}
\Big(
\|\bm{u}_{\bm{\theta}}(\bm{x},t) - \bm{u}_t(\bm{x})\|_2^2 \\
&+ \lambda\,\| g_{\bm{\phi}}(\bm{x},t) - g_t(\bm{x}) \|_2^2
\Big).	
\end{aligned}
\label{eq:UFM}
\end{equation*}
To obtain a tractable training objective, unbalanced FM adopts a latent-variable formulation with
$\bm{z}=((\bm{x}_0,m_0),(\bm{x}_1,m_1))$ sampled from a coupling distribution $q(\bm{z})$, and constructs conditional measure paths $\rho_t(\bm{x}\mid \bm{z})$ together with corresponding conditional target fields, leading to the conditional unbalanced flow matching (CUFM) objective
\begin{equation*}
\begin{aligned}
\mathcal{L}_{\text{CUFM}}(\bm{\theta},\bm{\phi})
=\ &\mathbb{E}_{t,\, \bm{z},\, \bm{x}\sim \rho_t(\bm{x}\vert \bm{z})}
\Big(
\|\bm{u}_{\bm{\theta}}(\bm{x},t)
-\bm{u}_t(\bm{x}\vert \bm{z})\|_2^2\\
&+ \lambda \| g_{\bm{\phi}}(\bm{x},t)- g_t(\bm{x}\vert \bm{z}) \|_2^2
\Big).
\label{eq:CUFM}
\end{aligned}
\end{equation*}	
With appropriate conditional paths and couplings, unbalanced FM recovers the geodesics of dynamic unbalanced OT; for example, under the WFR geometry, it leads to the WFR-FM formulation \citep{peng2026wfrfm}. Nevertheless, existing unbalanced FM methods require simulating the full state trajectory $(x_t,m_t)_{t\in[0,1]}$ during inference in order to obtain the terminal state $(x_1,m_1)$, which incurs substantial computational cost at scale.
We next introduce a mean-flow framework for unbalanced FM that enables direct one-step inference.

\section{Mean-Flow for Unbalanced Dynamic OT}
In this section, we introduce the concept of \emph{mean flow} and formulate a general framework for one-step generation in unbalanced FM.
We then specialize this framework to solve dynamic unbalanced OT under the WFR geometry, resulting in WFR-MFM.
\subsection{Mean-flow Variables and Derivative Identities}\label{sec:4.1}
\paragraph{Mean-flow Variables.} To this end, we introduce time-averaged dynamics through the \emph{mean velocity} $\bm{v}(\bm{x}, t, T)$ and \emph{mean mass-growth rate} $h(\bm{x}, t, T)$. For any trajectory $\bm{x}_\tau$ over a time interval $(t, T)$ with $t < T$, we define
\begin{equation}
\begin{aligned}
&\bm{v}(\bm{x}, t, T)
\;=\;
\frac{1}{T - t}\int_t^T \bm{u}_\tau(\bm{x}_\tau)\,d\tau,\\
&h(\bm{x}, t, T)
\;=\;
\frac{1}{T - t}\int_t^T g_\tau(\bm{x}_\tau)\,d\tau,  
\end{aligned}
\label{eq:mean_fields}
\end{equation}
where $\bm{u}_\tau$ and $g_\tau$ denote the transport velocity and mass-growth rate, respectively. These mean-flow variables represent the mean velocity and mass-growth rate required to evolve the system from $(\bm{x}_t, m_t)$ at time $t$ to $(\bm{x}_T, m_T)$ at time $T$ under the flow
$$
\frac{\mathrm{d} \bm{x_\tau}}{\mathrm{d} \tau} = \bm{u}_\tau(\bm{x}_\tau),\;\;\; \bm{x}_t = \bm{x}.
$$
By definition, the mean-flow variables $\bm{v}$ and $h$ satisfy appropriate boundary conditions and inherent consistency constraints. As the time interval shrinks, they recover the instantaneous quantities:
\begin{equation*}
\lim_{T \to t} \bm{v}(\bm{x}, t, T) = \bm{u}(\bm{x}, t),
\;\;\;
\lim_{T \to t} h(\bm{x}, t, T) = g(\bm{x}, t).
\label{eq:limit_fields}
\end{equation*}
Equivalently, the definitions of $\bm{v}$ and $h$ can be rewritten as
\begin{equation}
\begin{gathered}
(T - t)\,\bm{v}(\bm{x}, t, T)
\;=\;
\int_t^T \bm{u}_\tau(\bm{x}_\tau)\,d\tau,\\
(T - t)\,h(\bm{x}, t, T)
\;=\;
\int_t^T g_\tau(\bm{x}_\tau)\,d\tau.
\end{gathered}
\label{eq:integral_form}
\end{equation}

\paragraph{Additive consistency.}
The additivity of the integral implies, for any intermediate $s$ with $t < s < T$,
\[
(T-t)\,\bm{v}(\bm{x},t,T)
= (s-t)\,\bm{v}(\bm{x},t,s)
+ (T-s)\,\bm{v}(\bm{x}_s,s,T).
\]
Therefore, $\bm{v}$ satisfies multi-step consistency over any partition
$t_0 = t < t_1 < \cdots < t_K = T$:
\[
(T-t)\,\bm{v}(\bm{x},t,T)
= \sum_{k=0}^{K-1} (t_{k+1}-t_k)\,\bm{v}(\bm{x}_{t_k}, t_k, t_{k+1}).
\]
The mean mass-growth rate field $h(\bm{x},t,T)$ satisfies the same additive and multi-step consistency.

\paragraph{Derivative identities.}
Differentiating Eq.~\eqref{eq:integral_form} with respect to $t$ yields
\[
\begin{gathered}
 \bm{v}(\bm{x}, t, T)
= \bm{u}_t(\bm{x}) + (T-t)\,\frac{d}{dt}\bm{v}(\bm{x},t,T),\\
h(\bm{x}, t, T)
= g_t(\bm{x}) + (T-t)\,\frac{d}{dt}h(\bm{x},t,T),
  \end{gathered}
\]
where
\[
\begin{gathered}
 \frac{d}{dt}\bm{v}(\bm{x},t,T)
= \partial_t \bm{v}(\bm{x},t,T)
  + (\nabla_{\bm{x}} \bm{v}(\bm{x},t,T))\,\bm{u}_t(\bm{x}),\\  
\frac{d}{dt}h(\bm{x},t,T)
= \partial_t h(\bm{x},t,T)
  + \nabla_{\bm{x}} h(\bm{x},t,T)\cdot \bm{u}_t(\bm{x}).   
\end{gathered}
\]
These derivative identities link the mean-flow variables $(\bm{v},h)$ to the instantaneous fields $(\bm{u}_t,g_t)$.

\subsection{Inference and Training}
\paragraph{Inference.}
Given the mean-flow variables $(\bm{v},h)$, the mean-flow framework provides a direct terminal mapping for unbalanced dynamics. The \textbf{one-step update}
\[
\bm{x}_1 = \bm{x}_0 + \bm{v}(\bm{x}_0, 0, 1),
\;\;\;
m_1 = m_0 \,\exp\!\big(h(\bm{x}_0, 0, 1)\big)
\]
computes the terminal state in a single evaluation.
More generally, inference can be performed via a \textbf{multi-step update} based on additive consistency. For a partition
$0 = t_0 < t_1 < \cdots < t_K = 1$:
\begin{equation*}
\begin{gathered}
\bm{x}_{t_{k+1}} = \bm{x}_{t_k} + (t_{k+1}-t_k)\, \bm{v}(\bm{x}_{t_k}, t_k, t_{k+1}),\\
m_{t_{k+1}} = m_{t_k}\,\exp\!\big((t_{k+1}-t_k)\, h(\bm{x}_{t_k}, t_k, t_{k+1})\big).			
\end{gathered}
\end{equation*}
When intermediate time points are available, the partition follows the data; otherwise it can be chosen arbitrarily.
One-step inference corresponds to $K=1$, while larger $K$ provides a finer approximation to the underlying continuous dynamics. The complete inference procedure is summarized in Algorithm~\ref{alg:WFR-MFM-inference}.
\begin{algorithm}[t]
\caption{Mean-Flow Inference for Unbalanced FM}
  \label{alg:WFR-MFM-inference}
  \begin{algorithmic}

    \STATE {\bfseries Input:}
    mean-flow variables $\bm{v}$ and $h$; initial state $(\bm{x}_0, m_0)$; time partition $0=t_0<\cdots<t_K=1$ (optional).

    \STATE {\bfseries Output:}
    Terminal state $(\bm{x}_1,m_1)$.
    
        \STATE $(\bm{x}_{t_0}, m_{t_0}) \gets (\bm{x}_0, m_0)$
        \FOR{$k = 0$ {\bfseries to} $K-1$}
            \STATE $\bm{x}_{t_{k+1}} \gets \bm{x}_{t_k}+ (t_{k+1}-t_k)\, \bm{v}(\bm{x}_{t_k}, t_k, t_{k+1})$
            \STATE $m_{t_{k+1}} \gets m_{t_k}\exp\!\big((t_{k+1}-t_k)\,h(\bm{x}_{t_k}, t_k, t_{k+1})\big)$
        \ENDFOR
        \STATE {\bfseries return} $(\bm{x}_{1}, m_{1})$
  \end{algorithmic}
\end{algorithm}

\paragraph{Training.}
We train the mean-flow variables $(\bm v, h)$ within the mean-flow framework by parameterizing them with neural networks and enforcing the derivative identities in Sec.~\ref{sec:4.1} via the \textbf{unbalanced mean flow matching loss}:
\begin{equation*}
\begin{aligned}
\mathcal{L}(\bm{\theta},\bm{\phi})
=&\  \mathbb{E}_{t<T,\;\bm{x}\sim\rho_t(\bm{x})}
\Big[
\| \bm{v}_{\bm{\theta}}(\bm{x},t,T) - \text{sg}(\bm{v}(\bm{x},t,T))\|_2^2\\
&+ \lambda\,\| h_{\bm{\phi}}(\bm{x},t,T) - \text{sg}(h(\bm{x},t,T))\|_2^2
\Big],		
\end{aligned}
\end{equation*} 
where $\lambda>0$ balances the supervision of the velocity and mass-growth fields, and $\text{sg}(\cdot)$ is the stop-gradient operator. 
The regression targets $(\bm{v}, h)$ are computed via the derivative identities
\begin{equation*}
\begin{aligned}
\bm{v}(\bm{x}, t, T )
= \ & \bm{u}_t(\bm{x} )
+ (T - t)\!\Bigr[
\partial_t \bm{v}_{\bm{\theta}}(\bm{x},t,T)\\
&+ (\nabla_{\bm{x}} \bm{v}_{\bm{\theta}}(\bm{x},t,T))\,\bm{u}_t(\bm{x} )
\Bigr],\\[0.4em]
h(\bm{x}, t, T )
=\ & g_t(\bm{x} )
+ (T - t)\!\Bigr[
\partial_t h_{\bm{\phi}}(\bm{x},t,T)\\
&+ \nabla_{\bm{x}} h_{\bm{\phi}}(\bm{x},t,T)\cdot \bm{u}_t(\bm{x})
\Bigr].
\end{aligned}	
\end{equation*}
Minimizing the regression loss enforces $(\bm{v}_{\bm{\theta}},h_{\bm{\phi}})$ to recover the true mean-flow variables. 
To obtain a tractable training objective, we introduce conditional mean-field targets 
$\bm{v}(\bm{x},t,T\mid\bm{z})$ and $h(\bm{x},t,T\mid\bm{z})$, 
leading to the \textbf{conditional unbalanced mean flow matching loss}:
\begin{equation*}
\begin{aligned}
\mathcal{L}_{\mathrm{c}}(\bm{\theta},\bm{\phi}) =\ &  \mathbb{E}_{(t,T)\,:\,t<T,\bm{z}, \bm{x}\sim \rho_t(\bm{x}\vert \bm{z})}
\Big[
\| \bm{v}_{\bm{\theta}}(\bm{x},t,T) \\
&- \text{sg}(\bm{v}(\bm{x},t,T\mid\bm{z}))\|_2^2+ \lambda\,\| h_{\bm{\phi}}(\bm{x},t,T) \\
&- \text{sg}(h(\bm{x},t,T\mid\bm{z}))\|_2^2
\Big],		
\end{aligned}
\end{equation*}
where the conditional regression targets are defined as
\begin{equation*}
\begin{aligned}
\bm{v}(\bm{x}, t, T \mid \bm{z})
=\ & \bm{u}_t(\bm{x} \mid \bm{z})
+ (T - t)\,\Bigl[
\partial_t \bm{v}_{\bm{\theta}}(\bm{x},t,T) \\
&
+ (\nabla_{\bm{x}} \bm{v}_{\bm{\theta}}(\bm{x},t,T))\,\bm{u}_t(\bm{x} \mid \bm{z})
\Bigr],\\[0.4em]
h(\bm{x}, t, T \mid \bm{z})
= \ & g_t(\bm{x} \mid \bm{z})
+ (T - t)\,\Bigl[
\partial_t h_{\bm{\phi}}(\bm{x},t,T) \\
&
+ \nabla_{\bm{x}} h_{\bm{\phi}}(\bm{x},t,T)\cdot \bm{u}_t(\bm{x} \mid \bm{z})
\Bigr].
\end{aligned}
\end{equation*}
Here $\bm{u}_t(\bm{x}\mid\bm{z})$ and $g_t(\bm{x}\mid\bm{z})$ represent the instantaneous transport velocity and the mass-growth rate associated with the conditional measure path.

The following theorem shows that the unconditional objective differs from its conditional counterpart only by a constant, thereby justifying the optimization of the tractable conditional loss $\mathcal{L}_{\mathrm{c}}$. The proof is provided in Appendix~\ref{proof: thm1}.
\begin{theorem}\label{thm1}
If $\rho_t(\bm{x})>0$ for all $\bm{x}\in\mathcal{X}$ and $q(\bm{z})$ is independent of $(\bm{x},T)$, then
$
\mathcal{L}(\bm{\theta},\bm{\phi})
= \mathcal{L}_{\mathrm{c}}(\bm{\theta},\bm{\phi}) + C,
$
for a constant $C$ that does not depend on $(\bm{\theta},\bm{\phi})$. Consequently,
\[
\nabla_{(\bm{\theta},\bm{\phi})}\,\mathcal{L}(\bm{\theta},\bm{\phi})
= \nabla_{(\bm{\theta},\bm{\phi})}\,\mathcal{L}_{\mathrm{c}}(\bm{\theta},\bm{\phi}).
\]
\end{theorem}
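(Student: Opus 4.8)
The plan is to reduce the statement to the classical conditional-flow-matching marginalization argument, with the additional work concentrated on showing that the derivative-identity form of the regression targets is compatible with that argument. Throughout I treat the stop-gradient targets $\mathrm{sg}(\bm{v}(\bm{x},t,T))$, $\mathrm{sg}(\bm{v}(\bm{x},t,T\mid\bm{z}))$, $\mathrm{sg}(h)$ and $\mathrm{sg}(h\mid\bm{z})$ as detached functions, so that $\nabla_{(\bm{\theta},\bm{\phi})}$ never acts through them. Under this convention the two losses are genuinely comparable as functions of $(\bm{\theta},\bm{\phi})$, and any quantity assembled entirely from detached targets has zero gradient; this is precisely how the phrase ``constant $C$'' is to be read.

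First I would record the aggregation identities of the marginal path. Since $\rho_t(\bm{x})=\int \rho_t(\bm{x}\mid\bm{z})\,q(\bm{z})\,d\bm{z}$ and the instantaneous fields satisfy $\rho_t(\bm{x})\,\bm{u}_t(\bm{x})=\int \bm{u}_t(\bm{x}\mid\bm{z})\,\rho_t(\bm{x}\mid\bm{z})\,q(\bm{z})\,d\bm{z}$ and likewise for $g_t$, the marginal instantaneous fields are conditional expectations, $\bm{u}_t(\bm{x})=\mathbb{E}_{\bm{z}\mid\bm{x}}[\bm{u}_t(\bm{x}\mid\bm{z})]$ and $g_t(\bm{x})=\mathbb{E}_{\bm{z}\mid\bm{x}}[g_t(\bm{x}\mid\bm{z})]$, where $\mathbb{E}_{\bm{z}\mid\bm{x}}$ is the expectation under the law $\rho_t(\bm{x}\mid\bm{z})\,q(\bm{z})/\rho_t(\bm{x})$. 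Here the hypotheses $\rho_t(\bm{x})>0$ and $q(\bm{z})$ independent of $(\bm{x},T)$ guarantee that this conditional law is well defined and that the $(t,T)$ sampling factors through the $\bm{z}$-marginalization.

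The key lemma is that the same conditional-expectation identity lifts to the mean-flow targets: $\bm{v}(\bm{x},t,T)=\mathbb{E}_{\bm{z}\mid\bm{x}}[\bm{v}(\bm{x},t,T\mid\bm{z})]$ and $h(\bm{x},t,T)=\mathbb{E}_{\bm{z}\mid\bm{x}}[h(\bm{x},t,T\mid\bm{z})]$. This is where the derivative identity must be inspected. Writing $\bm{v}(\bm{x},t,T\mid\bm{z})=(I+(T-t)\nabla_{\bm{x}}\bm{v}_{\bm{\theta}})\,\bm{u}_t(\bm{x}\mid\bm{z})+(T-t)\,\partial_t\bm{v}_{\bm{\theta}}$, one sees that as a map of the instantaneous field it is affine, with coefficient matrix and intercept that depend on $(\bm{x},t,T)$ but not on $\bm{z}$; the target $h(\bm{x},t,T\mid\bm{z})=g_t(\bm{x}\mid\bm{z})+(T-t)(\partial_t h_{\bm{\phi}}+\nabla_{\bm{x}} h_{\bm{\phi}}\cdot\bm{u}_t(\bm{x}\mid\bm{z}))$ is likewise jointly affine in the pair $(\bm{u}_t(\bm{x}\mid\bm{z}),g_t(\bm{x}\mid\bm{z}))$ with $\bm{z}$-free coefficients. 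Applying $\mathbb{E}_{\bm{z}\mid\bm{x}}$ therefore commutes with these affine maps and hits only the instantaneous conditional fields, whose conditional expectations are $\bm{u}_t(\bm{x})$ and $g_t(\bm{x})$, reproducing exactly the unconditional targets. I expect this affine-transfer step to be the main obstacle, since it is the only place where the nonstandard target structure could break the classical argument, and it relies crucially on the stop-gradient ensuring that the derivative coefficients $\nabla_{\bm{x}}\bm{v}_{\bm{\theta}},\partial_t\bm{v}_{\bm{\theta}},\nabla_{\bm{x}} h_{\bm{\phi}},\partial_t h_{\bm{\phi}}$ are shared verbatim between the conditional and unconditional targets.

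Finally I would run the standard bias--variance decomposition conditionally on $\bm{x}$. At fixed $(t,T)$ and $\bm{x}$, expand $\|\bm{v}_{\bm{\theta}}-\bm{v}(\cdot\mid\bm{z})\|_2^2=\|\bm{v}_{\bm{\theta}}-\bm{v}\|_2^2+2\langle \bm{v}_{\bm{\theta}}-\bm{v},\,\bm{v}-\bm{v}(\cdot\mid\bm{z})\rangle+\|\bm{v}-\bm{v}(\cdot\mid\bm{z})\|_2^2$; since $\bm{v}_{\bm{\theta}}-\bm{v}$ depends on $\bm{x}$ but not on $\bm{z}$ and $\mathbb{E}_{\bm{z}\mid\bm{x}}[\bm{v}-\bm{v}(\cdot\mid\bm{z})]=0$ by the lemma, the cross term vanishes under $\mathbb{E}_{\bm{z}\mid\bm{x}}$, and the same holds for the $h$-contribution. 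Taking the full expectation and using the tower property with $\rho_t(\bm{x})=\int\rho_t(\bm{x}\mid\bm{z})q(\bm{z})\,d\bm{z}$ (which also makes the $\|\bm{v}_{\bm{\theta}}\|_2^2$ and $\|h_{\bm{\phi}}\|_2^2$ pieces agree, as both sampling schemes induce the same marginal law of $\bm{x}$ at time $t$) yields $\mathcal{L}_{\mathrm{c}}(\bm{\theta},\bm{\phi})=\mathcal{L}(\bm{\theta},\bm{\phi})+C$ with $C=\mathbb{E}\,\|\bm{v}-\bm{v}(\cdot\mid\bm{z})\|_2^2+\lambda\,\mathbb{E}\,\|h-h(\cdot\mid\bm{z})\|_2^2\ge 0$, equivalently $\mathcal{L}=\mathcal{L}_{\mathrm{c}}-C$. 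Because $C$ is built entirely from detached stop-gradient targets, it has zero gradient with respect to $(\bm{\theta},\bm{\phi})$, so $\nabla_{(\bm{\theta},\bm{\phi})}\mathcal{L}=\nabla_{(\bm{\theta},\bm{\phi})}\mathcal{L}_{\mathrm{c}}$, completing the argument.
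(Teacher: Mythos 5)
Your proposal is correct and follows essentially the same route as the paper's proof: your ``affine-transfer'' lemma, stating that $\bm{v}(\bm{x},t,T)=\mathbb{E}_{\bm{z}\mid\bm{x}}[\bm{v}(\bm{x},t,T\mid\bm{z})]$ (and likewise for $h$) because the derivative-identity targets are affine in the instantaneous fields with $\bm{z}$-free coefficients, is exactly the marginalization carried out in the paper's Step~2 cross-term computation, and your conditional bias--variance expansion is the paper's Steps~1 and~3 (expand the square, match cross terms, absorb squared targets into a stop-gradient constant) in equivalent packaging. Your explicit remark that the stop-gradient convention is what makes the target-dependent terms ``constants'' is, if anything, stated more carefully than in the paper.
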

Thus, the conditional and unconditional objectives yield identical optimal estimators of the mean-flow variables, and the method naturally extends from two to multiple time points. 

\begin{remark}
\label{rem:conditional_mean_fields}
The choice of the mean-flow targets and their conditional counterparts in the unbalanced setting is \textit{not rivial}.
Their specific construction via the mean-flow derivative identities is crucial for the validity of Theorem~\ref{thm1}.
A seemingly natural alternative is to define the learning objective directly according to the mean-flow definition in Eq.~\eqref{eq:mean_fields}.
The corresponding conditional mean-flow variables are given by
\begin{equation*}
\begin{aligned}
&\bm{\tilde{v}}(\bm{x}, t, T\mid\bm{z})
\;=\;
\frac{1}{T - t}\int_t^T \bm{u}_\tau(\bm{x}_\tau\mid\bm{z})\,d\tau,\\
&\tilde{h}(\bm{x}, t, T\mid\bm{z})
\;=\;
\frac{1}{T - t}\int_t^T g_\tau(\bm{x}_\tau\mid\bm{z})\,d\tau.
\end{aligned}
\end{equation*}
However, such constructions generally break the equivalence between the conditional and unconditional objectives, whose proof is provided in Appendix~\ref{proof: thm2}.
\end{remark}

\paragraph{WFR-MFM for Dynamic Unbalanced OT.}
Within the general mean-flow framework for unbalanced FM, the choice of the coupling distribution $q(z)$ determines the underlying transport geometry.
When $q(z)$ is chosen as the unbalanced OT coupling induced by an optimal entropy transport problem under the WFR geometry, the resulting mean-flow dynamics follow WFR geodesics and admit direct state-to-state evolution. Theoretical details on the construction of $q(z)$ are provided in Appendix~\ref{Appendix: WFR-MFM}.

This specialization gives rise to WFR-MFM, a mean-flow algorithm for dynamic unbalanced OT that enables one-step inference under the WFR geometry without trajectory-level ODE simulation.
The training algorithm is summarized in Algorithm~\ref{alg:WFR-MFM}.

\begin{algorithm}[t]
  \caption{WFR-MFM Training}
  \label{alg:WFR-MFM}
  \begin{algorithmic}
    \STATE {\bfseries Input:} Sample-able distributions $\mu_0,\mu_1$; bandwidth $\sigma$; OET batch size $B$; training batch size $b$; WFR penalty $\delta$; mean-flow variables $\bm{v}_{\bm{\theta}}(\bm{x},t,T)$ and $h_{\bm{\phi}}(\bm{x},t,T)$
    \STATE {\bfseries Output:} Trained mean fields $\bm{v}_{\bm{\theta}}$ and $h_{\bm{\phi}}$
    \FOR{$k = 0 \to K-1$}
        \STATE $\gamma^{(k)} \gets \text{OET}(\mu_{t_k}, \mu_{t_{k+1}})$ {\bfseries or} mini-batch $\mathrm{OET}(\mu_{t_k}, \mu_{t_{k+1}};B)$
        \STATE $\gamma_0^{(k)}(\bm{x},\bm{y}) \gets \frac{\gamma^{(k)}(\bm{x},\bm{y})}{\int_\mathcal{X}\gamma^{(k)}{(\bm{x},\bm{z})\mathrm{d} \bm{z}}}\mu_{t_k}(\bm{x})$
    \ENDFOR

    \WHILE{Training}
      \FOR{$k = 0 \to K-1$}
              \STATE Sample $b$ pairs $(\bm{x}_{t_k}, \bm{x}_{t_{k+1}}) \sim \gamma_0$
              \STATE Compute travelling-Dirac constants $A, B, \bm{\omega}_0, \tau$
              \STATE Sample time pairs $(t,T)\in [t_k,t_{k+1}]$ with $t<T$
              \STATE $\bm{\eta}_{t^{(k)}} \gets \bm{x}_{t_k} +\bm{\omega}_0 \Lambda_t(\bm{x}_{t_k},\bm{x}_{t_{k+1}}) $
              \STATE Sample $\bm{x}^{(k)} \sim \mathcal{N}(\bm{\eta}_{t^{(k)}}, \sigma^2 \mathbf{I})$
              \STATE $\bm{u}^{(k)} \gets \bm{\omega}_0/(m_{t^{}}(\bm{x}_{t_k},\bm{x}_{t_{k+1}})(t_{k+1}-t_k))$
              \STATE $g^{(k)} \gets \frac{\mathrm{d}}{\mathrm{d}t} \ln m_{t}(\bm{x}_{t_k},\bm{x}_{t_{k+1}})/(t_{k+1}-t_k)$
              \STATE $m^{(k)} \gets m_{t}(\bm{x}_{t_k},\bm{x}_{t_{k+1}}) / \gamma_0^{(k)}(\bm{x}_{t_k},\bm{x}_{t_{k+1}})$
              \STATE $\bm{v}^{(k)}  \gets \bm{u}^{(k)} + \text{sg} ((T-t)\,[\partial_t \bm{v}_{\bm{\theta}}(\bm{x},t,T) + (\nabla_{\bm{x}} \bm{v}_{\bm{\theta}}(\bm{x},t,T))\,\bm{u}^{(k)}])$
              \STATE $h^{(k)} \gets g^{(k)} + \text{sg} ((T-t)\,[\partial_t h_{\bm{\phi}}(\bm{x},t,T) + \nabla_{\bm{x}} h_{\bm{\phi}}(\bm{x},t,T)\cdot\bm{u}^{(k)}])$
      \ENDFOR
      \STATE Concatenate $\{\bm{x}^{(i)}, t^{(i)}, T^{(i)},\bm{v}^{(i)}, h^{(i)}, m^{(i)}\}_{i=1}^K$ into batched tensors $\{\bm{x}^b, t^b, T^b, \bm{v}^b, h^b, m^b\}$ 
      \STATE $\mathcal{L}_{\mathrm c}(\bm{\theta},\bm{\phi}) \gets \big(\|\bm{v}_{\bm{\theta}}(\bm{x}^b,t^b,T^b)-\bm{v}^b\|_2^2 + \lambda\,\|h_{\bm{\phi}}(\bm{x}^b,t^b,T^b)-h^b\|_2^2\big)\,m^b$
      \STATE $(\bm{\theta},\bm{\phi}) \gets \mathrm{Update}\big((\bm{\theta},\bm{\phi}),(\nabla_{\bm{\theta}}\mathcal{L}_{\mathrm c},\nabla_{\bm{\phi}}\mathcal{L}_{\mathrm c})\big)$
    \ENDWHILE

    \STATE {\bfseries return} $\bm{v}_{\bm{\theta}}$ and $h_{\bm{\phi}}$.
  \end{algorithmic}
\end{algorithm}

\section{Numerical Results}
We evaluate WFR-MFM by addressing the following questions:
\textbf{Q1}: Does WFR-MFM significantly accelerate inference?
\textbf{Q2}: Does WFR\text{-}MFM maintain high predictive accuracy under fast inference?
\textbf{Q3}: Is there a controllable trade-off between inference speed and accuracy?
\textbf{Q4}: Is WFR-MFM scalable in both training and inference?

\paragraph{Massive acceleration in inference speed (Q1).}
We evaluate inference runtime on three synthetic datasets (Gene, Dyngen, Gaussian), the real-world embryoid bodies (EB) dataset \citep{moon2019visualizing}, and three real multi-time scRNA-seq datasets (EMT \citep{cook2020context}, CITE-seq \citep{lance2022multimodal}, and Mouse hematopoiesis \citep{weinreb2020lineage}). 
Existing methods for dynamic unbalanced OT perform inference via ODE simulation and thus incur similar computational costs. As a representative baseline, we compared against WFR-FM, which follows the standard ODE-based inference paradigm. 
Its original implementation uses an adaptive Dormand–Prince RK5(4) solver \citep{dormand1980family}; we additionally report results with a fixed-step explicit Euler solver (100 steps). The runtime comparison is presented in Table~\ref{tab:inference_time}. Across all datasets, WFR-MFM achieves substantial acceleration relative to these ODE-based baselines, providing inference \textit{speedups of two to three orders of magnitude} compared to adaptive RK solvers and remaining approximately \textit{two orders of magnitude faster} than 100-step Euler solvers. These results demonstrate the significant advantages of WFR-MFM in fast inference.

\begin{table*}[t]
\caption{\textbf{Inference time comparison across datasets.}
All values are reported as \textit{mean $\pm$ sd} (seconds per inference). $N_0$ denotes the number of cells at the initial time point and $T$ is the number of future time points to infer. WFR-MFM is evaluated over 1000 runs, while WFR-FM with adaptive Dormand--Prince RK5(4) and with fixed-step Euler (100 steps) are each evaluated over 100 runs. \textit{Spd-RK} and \textit{Spd-Eu} denote the speedup of WFR-MFM relative to WFR-FM (RK) and WFR-FM (Euler), respectively.
\textit{Spd-RK} and \textit{Spd-Eu} denote the speedup of WFR-MFM relative to
WFR-FM (RK) and WFR-FM (Euler), respectively.}
\label{tab:inference_time}

  \begin{center}
    \begin{small}
      \begin{sc}
             \resizebox{0.95\textwidth}{!}{
        \begin{tabular}{lcccccccc}
          \toprule
          \textbf{Dataset} & \textbf{Dim} & $\mathbf{N_0}$ & $\mathbf{T}$ &
          \textbf{WFR-FM (RK)} & \textbf{WFR-FM (Euler)} & \textbf{WFR-MFM} &
          \textbf{Spd-RK} & \textbf{Spd-Eu} \\
          \midrule
          Gene     & 2    & 400  & 4 & 2.178 \tiny$\pm$0.148 & 0.370 \tiny$\pm$0.021 & 0.0038 \tiny$\pm$0.0009 & 573$\times$  & 97$\times$  \\
          Dyngen   & 5    & 156  & 4 & 0.488 \tiny$\pm$0.049 & 0.409 \tiny$\pm$0.039 & 0.0039 \tiny$\pm$0.0010 & 125$\times$  & 105$\times$ \\
          Gaussian & 1000 & 500  & 1 & 0.794 \tiny$\pm$0.087 & 0.143 \tiny$\pm$0.018 & 0.0013 \tiny$\pm$0.0006 & 611$\times$  & 110$\times$ \\
          EMT      & 10   & 577  & 3 & 0.225 \tiny$\pm$0.018 & 0.311 \tiny$\pm$0.025 & 0.0029 \tiny$\pm$0.0009 & 78$\times$   & 107$\times$ \\
          EB       & 50   & 2381 & 4 & 9.191 \tiny$\pm$0.505 & 0.384 \tiny$\pm$0.021 & 0.0031 \tiny$\pm$0.0013 & 2965$\times$ & 124$\times$ \\
          Cite     & 50   & 7476 & 3 & 0.107 \tiny$\pm$0.006 & 0.319 \tiny$\pm$0.008 & 0.0028 \tiny$\pm$0.0006 & 38$\times$   & 114$\times$ \\
          Mouse    & 50   & 4638 & 2 & 3.440 \tiny$\pm$0.194 & 0.198 \tiny$\pm$0.008 & 0.0021 \tiny$\pm$0.0007 & 1638$\times$ & 94$\times$  \\
          \bottomrule
        \end{tabular}
        }
      \end{sc}
    \end{small}
  \end{center}

  \vskip -0.1in
\end{table*}

\begin{figure}[t]
    \centering
        \includegraphics[width=1\linewidth]{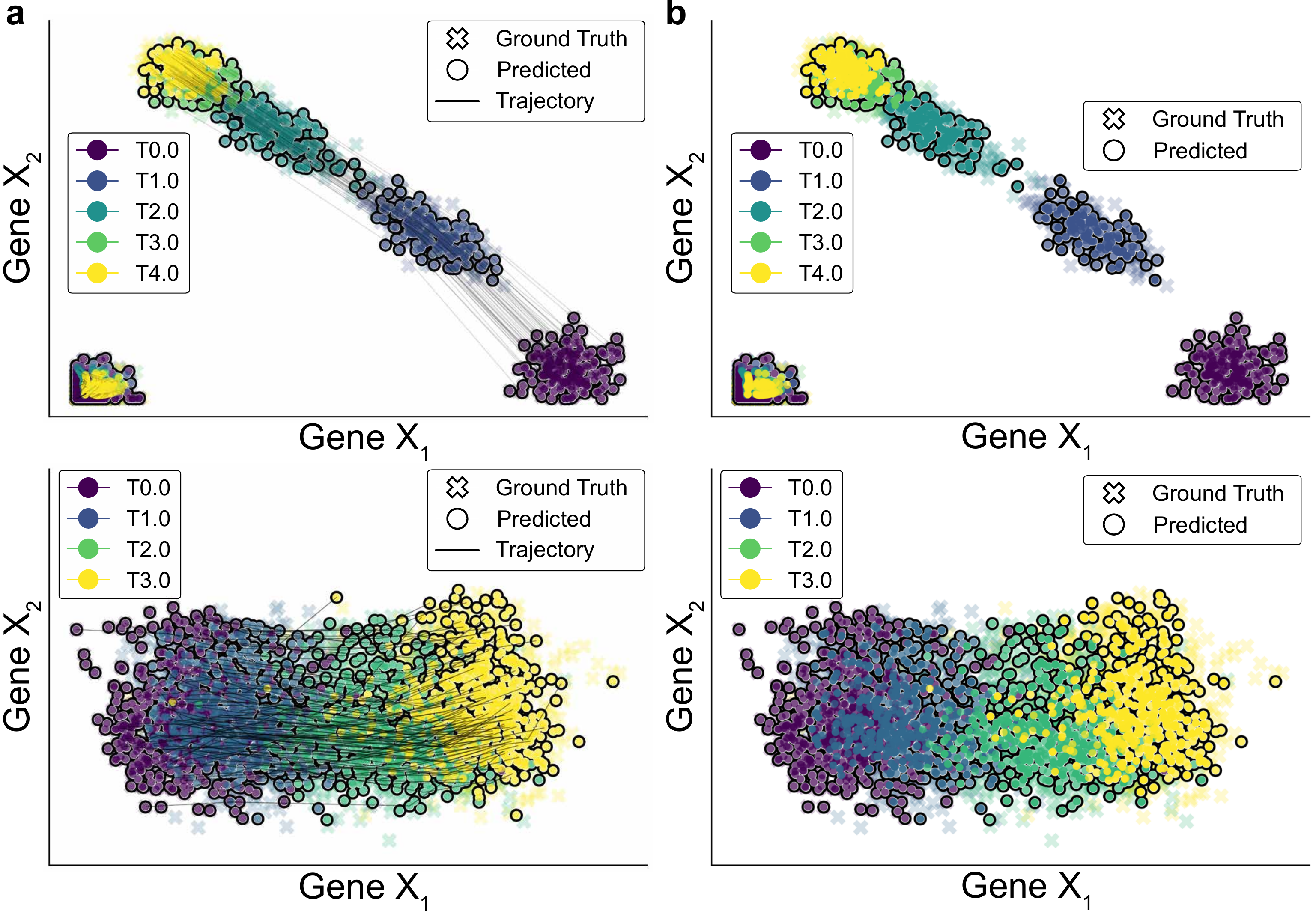}
\caption{
\textbf{Learned dynamics on the Gene and EMT datasets.}
Top: Gene dataset; bottom: EMT dataset.
(a) Individual cell trajectories with evolving weights.
(b) Population-level dynamics obtained by resampling cells according to normalized weights.
}  \label{fig:Gene}
\end{figure}

\paragraph{Accuracy preserved under fast inference (Q2).}
A natural concern is whether accuracy of distribution will decrease. We evaluate WFR-MFM on synthetic datasets (Table \ref{tab:Q21}) and real biological datasets (Table~\ref{tab:Q22} in Appendix~\ref{Appendix: held-out}) using $\mathcal{W}_1$ and RME. Across synthetic datasets (Gene, Dyngen, Gaussian), WFR-MFM consistently ranks first or second in both metrics. On real datasets (EMT, EB, CITE, Mouse), evaluations on held-out time points show a similar pattern, with WFR-MFM achieving the best score on CITE and top-2 performance on the remaining datasets.
Overall, these results demonstrate that WFR-MFM achieves fast inference without compromising accuracy. Figure~\ref{fig:Gene} visualizes Gene and EMT datasets from two complementary perspectives: (a) individual cell trajectories with a fixed cell count and evolving weights, and (b) changes in cell abundance obtained by normalizing weights into probabilities and resampling.

\begin{table*}[t]
  \caption{
Synthetic dataset performance under mean $\mathcal{W}_1$ and RME. RME is reported only for unbalanced methods. For stochastic inference methods, we report mean $\pm$ sd across five independent runs.
}
  
  \label{tab:Q21}

  \begin{center}
    \begin{small}
      \begin{sc}
        \resizebox{0.95\textwidth}{!}{
        \begin{tabular}{lcccccc}
          \toprule
          \textbf{Method} &
          \multicolumn{2}{c}{Gene (2D)} &
          \multicolumn{2}{c}{Dyngen (5D)} &
          \multicolumn{2}{c}{Gaussian (1000D)} \\
          \cmidrule(lr){2-3} \cmidrule(lr){4-5} \cmidrule(lr){6-7}
          & $\mathcal{W}_1$ ($\downarrow$) & RME ($\downarrow$)
          & $\mathcal{W}_1$ ($\downarrow$) & RME ($\downarrow$)
          & $\mathcal{W}_1$ ($\downarrow$) & RME ($\downarrow$) \\
          \midrule
          MMFM \citep{rohbeck2025modeling}              & 0.298                 & ---                  & 1.371                 & ---                  & 2.833                 & --- \\
          Metric FM \citep{kapusniak2024metric}        & 0.311                 & ---                  & 1.767                 & ---                  & 3.794                 & --- \\
          SF2M \citep{sflowmatch}                      & 0.224\tiny$\pm$0.007   & ---                  & 1.277\tiny$\pm$0.017   & ---                  & 3.543\tiny$\pm$0.002   & --- \\
          MIOFlow \citep{mioflow}                      & 0.148                 & ---                  & 0.965                 & ---                  & 2.858                 & --- \\
          TIGON \citep{TIGON}                          & 0.045                 & 0.014                & \underline{0.512}     & 0.047                & \underline{2.263}     & 0.127 \\
          DeepRUOT \citep{DeepRUOT}                    & 0.043\tiny$\pm$0.002   & 0.017\tiny$\pm$0.001 & 0.623\tiny$\pm$0.032   & 0.065\tiny$\pm$0.011 & 3.785\tiny$\pm$0.009   & 0.303\tiny$\pm$0.070 \\
          Var-RUOT \citep{sun2025variational}          & 0.079\tiny$\pm$0.003   & 0.008\tiny$\pm$0.002 & 0.522\tiny$\pm$0.008   & 0.177\tiny$\pm$0.007 & 2.813\tiny$\pm$0.004   & 0.041\tiny$\pm$0.006 \\
          UOT-FM \citep{eyring2024unbalancedness}      & 0.093                 & 0.010                & 1.204                 & 0.097                & 2.771                 & \underline{0.033} \\
          VGFM \citep{wang2025joint}                   & 0.046                 & 0.006                & 0.598                 & 0.037                & 3.010                 & 0.037 \\
          WFR-FM \citep{peng2026wfrfm}            & \textbf{0.019}         & \textbf{0.001}        & \textbf{0.135}         & \textbf{0.005}        & \textbf{2.233}         & 0.044 \\
          \textbf{WFR-MFM}                      & \underline{0.021}      & \underline{0.003}      & \underline{0.176}      & \underline{0.023}      & \textbf{2.233}         & \textbf{0.001}\\
          \bottomrule
        \end{tabular}
        }
      \end{sc}
    \end{small}
  \end{center}

  \vskip -0.1in
\end{table*}

\paragraph{Smooth speed--accuracy trade-off (Q3).}
We assess whether WFR-MFM enables a controllable balance between inference speed and accuracy by varying only the number of inference steps \(K\) in Algorithm~\ref{alg:WFR-MFM-inference}, without retraining the model. As $K$ increases, $\mathcal{W}_1$ and RME may exhibit minor fluctuations at small $K$ but decrease and stabilize overall (Fig.~\ref{fig:W1_RME_time_steps}a), while runtime grows approximately linearly (Fig.~\ref{fig:W1_RME_time_steps}b).
Together, these results demonstrate that WFR-MFM offers a continuous speed--accuracy trade-off: one-step inference maximizes efficiency, while a small number of steps achieves accuracy comparable to full dynamic WFR at a fraction of its computational cost.

\begin{figure}[t]
  \centering
  \includegraphics[width= 1\linewidth]{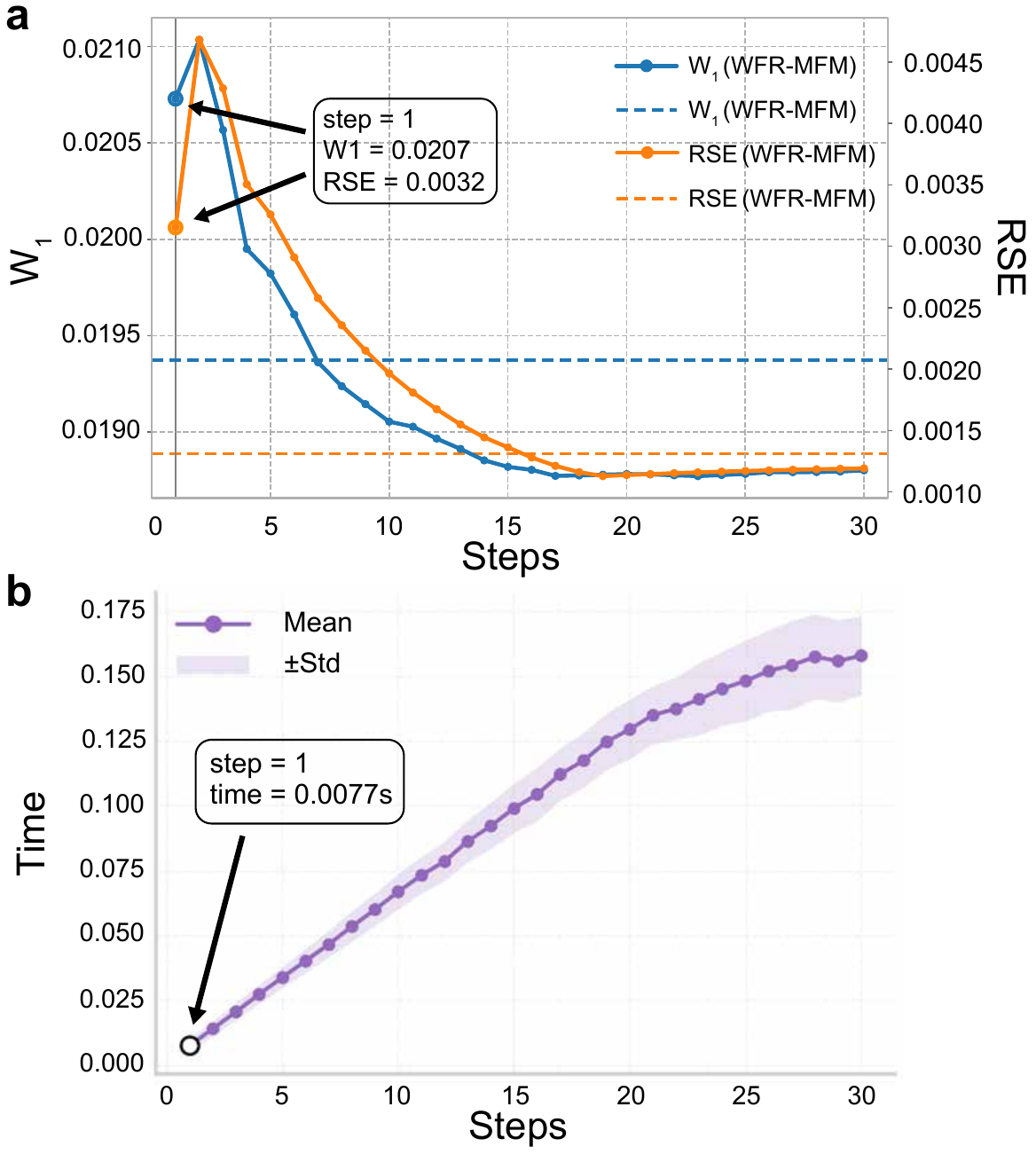}\\
  \caption{\small
  \textbf{Speed--accuracy trade-off (Q3) on the Simulation dataset.}
  (a) $\mathcal{W}_1$ and RME versus the number of inference steps.
  (b) Corresponding inference runtime versus the number of steps.
  For each step number, we perform 1000 independent runs.
  }
  \label{fig:W1_RME_time_steps}
\end{figure}

\paragraph{Scalability to Large-Scale Datasets (Q4).}
We evaluate WFR-MFM in a high-dimensional setting by comparing predictive accuracy and computational efficiency with multiple baseline methods,
and further assess its inference scalability and practical potential on a large-scale synthetic perturbation benchmark.
On the 100D EB dataset, we compare predictive accuracy ($\mathcal{W}_1$ distance) and computational efficiency
(training time, inference time, and memory usage) across multiple methods (Figure~\ref{fig:3d}). 
WFR-MFM achieves competitive prediction accuracy while significantly outperforming existing methods in inference speed, and also ranks among the top in terms of training cost and memory usage. Additional views are provided in Appendix~\ref{Appendix:EB}.
\begin{figure}[t]
    \centering
        \includegraphics[width=\linewidth]{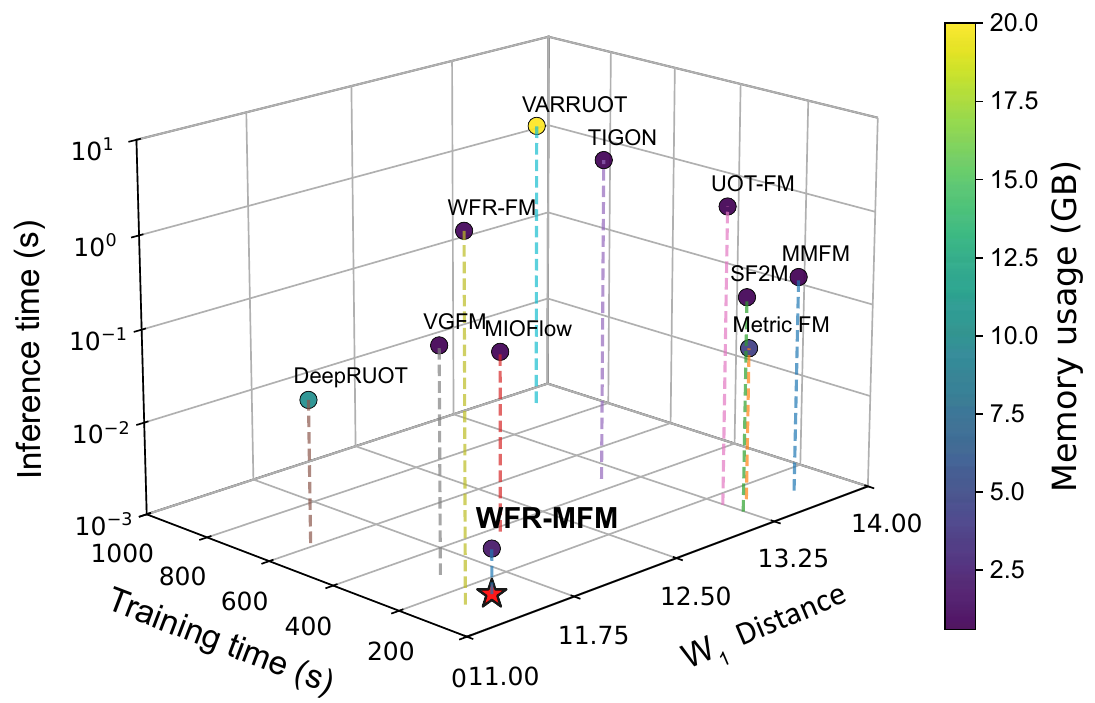}
\caption{
\textbf{Efficiency on the 100D EB dataset.}
Methods are compared by $\mathcal{W}_1$ distance, training time,
and inference time (log scale), with colors indicating GPU memory usage.
}
\label{fig:3d}
\end{figure}

In single-cell perturbation experiments, perturbation conditions grow combinatorially across modalities and experimental contexts \citep{klein2025cellflow}, making scalable and rapid inference essential.
To enable such large-scale perturbation studies within a single model, we condition the mean velocity field $\bm{v}$ and rate field $h$ on a perturbation embedding $\bm{c}$, i.e., $\bm{v}(\bm{x}, \bm{c}, t, T)$ and $h(\bm{x}, \bm{c}, t, T)$.
Details are provided in Appendix~\ref{Appendix: perturbation},
following strategies shown effective in prior work \citep{klein2025cellflow, rohbeck2025modeling}.
We evaluate large-scale perturbation inference on a synthetic benchmark with 5100 perturbation conditions (Appendix~\ref{sim perturb}), using 100 for training and 5000 for testing. 
WFR-MFM completes inference for all 5000 unseen conditions on 10,000 cells in 6.55 seconds,
whereas WFR-FM requires about 0.40 seconds for a representative perturbation condition,
implying roughly two thousand seconds for full-scale inference.
Despite this speedup, WFR-MFM maintains strong accuracy
($\mathcal{W}_1=0.115$, RME$=0.069$),
with predicted distributions closely matching ground truth
(Figure~\ref{fig:unseen_perturbations}).
These results show that WFR-MFM generalizes from limited observed perturbations to thousands of unseen conditions, enabling large-scale perturbation response prediction.
\begin{figure}[t]
\centering
\includegraphics[width=0.95\linewidth]{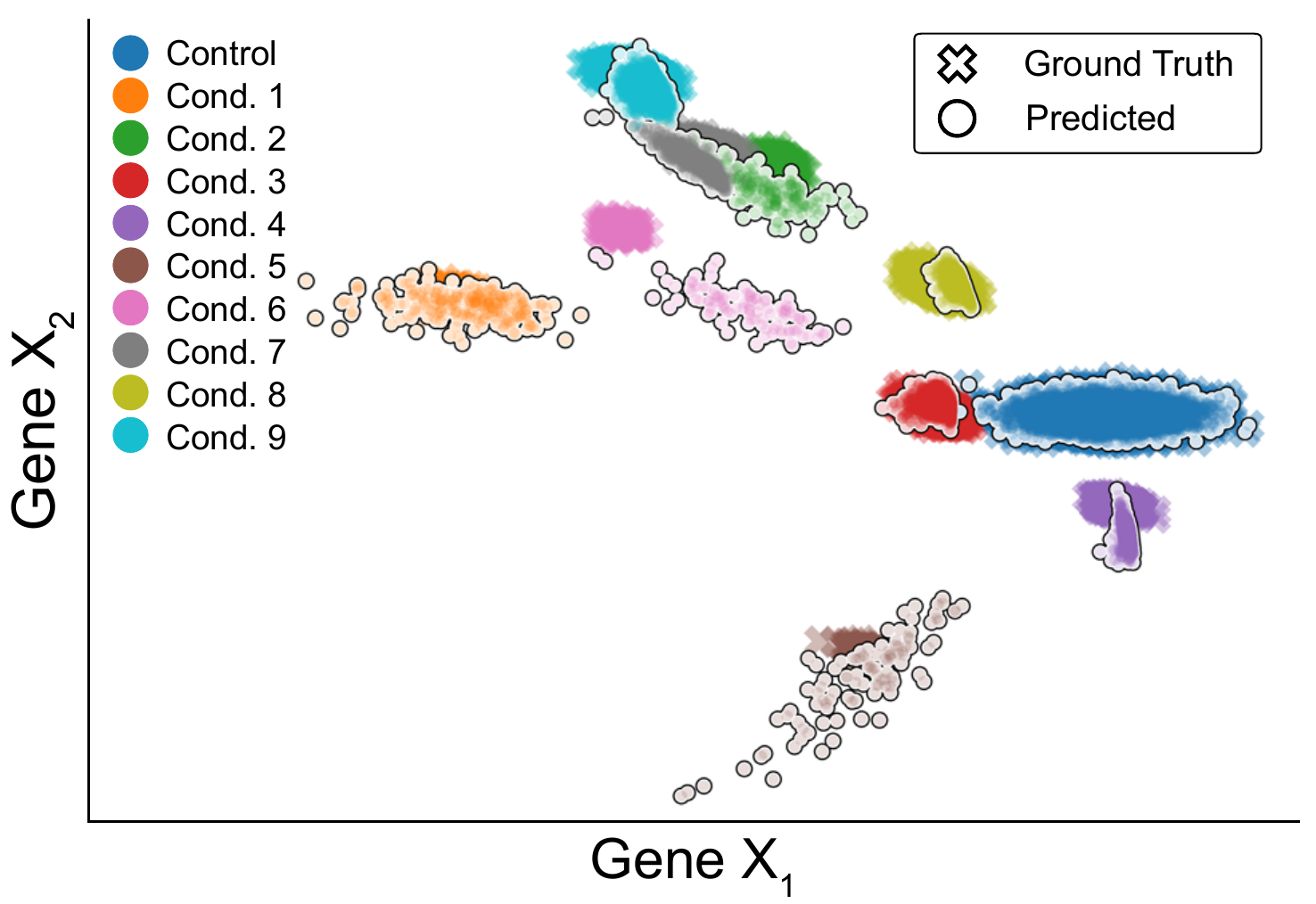}
\caption{
\textbf{Predictions on unseen perturbation conditions.}
Gene expression distributions before and after perturbation for nine unseen conditions,
visualized by resampling cells using predicted weights
(control denotes the unperturbed baseline).
}
\label{fig:unseen_perturbations}
\end{figure}

\section{Conclusion}
We formulate a mean-flow framework for unbalanced FM that summarizes transport and mass-growth dynamics over arbitrary time intervals via mean velocity and mass-growth fields, enabling fast one-step generation without trajectory simulation.
We then develop \textbf{WFR-MFM}, a mean flow matching algorithm for dynamic unbalanced OT under the WFR geometry with simulation-free inference.
Experiments on synthetic and real scRNA-seq datasets demonstrate orders-of-magnitude inference speedups while maintaining strong predictive accuracy, as well as a controllable speed--accuracy trade-off through multi-step consistency.

While WFR-MFM already achieves strong inference efficiency and accuracy, recent refinements to the mean-flow framework that improve stability and inference quality \citep{guo2025splitmeanflow,zhang2025alphaflow,geng2025improved} are not incorporated here and may further enhance its performance.
Extending WFR-MFM to large-scale experimental perturbation datasets and validating its practical utility in real-world settings remain important directions for future work. Beyond the WFR geometry, the proposed mean-flow perspective may further extend to other unbalanced transport formulations with well-defined Dirac-to-Dirac flows.

\section*{Acknowledgements}
This work was supported by the National Key R\&D Program of China (No.\ 2021YFA1003301 to T.L.) and the National Natural Science Foundation of China (NSFC Nos.\ 12288101 to T.L.\ \& P.Z., 8206100646 and T2321001 to P.Z.). We acknowledge support from the High-performance Computing Platform of Peking University for computation.

\bibliography{reference}
\bibliographystyle{unsrtnat}
\newpage
\appendix
\onecolumn
\section{Additional Theoretical and Method Details}\label{sec:appendix-methods}
\subsection{Details of WFR-MFM}\label{Appendix: WFR-MFM}
Most constructions in this subsection follow WFR-FM \citep{peng2026wfrfm}; the mean-flow training objective is the main departure.
\paragraph{WFR optimal transport.}
The dynamic formulation of unbalanced OT endowed with the WFR metric \citep{chizat2018interpolating,chizat2018unbalanced,liero2018optimal} is
\begin{equation}
\label{Dynamic WFR}
\begin{gathered}
\mathrm{WFR}_{\delta}^2(\mu_0,\mu_1)
=\inf_{\rho,g,\bm{u}} \int_0^1\!\int_{\mathcal{X}} \frac{1}{2}\big(\|\bm{u}(\bm{x},t)\|_2^2+\delta^2 \|g(\bm{x},t)\|_2^2\big)\,\rho_t(\bm{x})\,\mathrm{d}\bm{x}\,\mathrm{d}t \\
\text{s.t.}\qquad \partial_t\rho+\nabla_{\bm{x}}\!\cdot(\rho \bm{u})=\rho g,\quad \rho_0=\mu_0,\ \rho_1=\mu_1.
\end{gathered}
\end{equation}

The closed-form WFR distance between two Dirac measures $m_0\delta_{\bm{x}_0}$ and $m_1\delta_{\bm{x}_1}$ is given by \citep{chizat2018interpolating,liero2018optimal}:
\begin{equation*}
\label{dirac WFR}
\mathrm{WFR\text{-}DD}_{\delta}^2(m_0\delta_{\bm{x}_0},m_1\delta_{\bm{x}_1})
=2\delta^2\!\left(m_0+m_1-2\sqrt{m_0m_1}\,\overline{\operatorname{cos}}\big(\tfrac{\|\bm{x}_0-\bm{x}_1\|_2}{2\delta}\big)\right),
\end{equation*}
where $\overline{\operatorname{cos}}(\bm x)=\cos(\min\{ \bm x,\,\frac{\pi}{2}\})$. When $\|\bm{x}_0-\bm{x}_1\|_2<\pi\delta	$, the corresponding geodesic curve—i.e., the OT path, known as the traveling Dirac—is described by
\begin{equation}
\label{travelling dirac}
m(t)=At^2-2Bt+m_0,\qquad \bm{u}(\bm{x},t)\, m(t)=\bm{\omega}_0,
\end{equation}
where $A,B,\bm{\omega}_0$ are
\begin{equation}
\label{ABw}
\left\{
\begin{aligned}
&A=m_0+m_1-2\sqrt{\frac{m_0m_1}{1+\tau^2}},\qquad
B=m_0-\sqrt{\frac{m_0m_1}{1+\tau^2}},\\
&\bm{\omega}_0=2\delta\tau \sqrt{\frac{m_0m_1}{1+\tau^2}}\,\bm{l},\qquad
\tau=\operatorname{tan}\!\Big(\tfrac{\|\bm{x}_1-\bm{x}_0\|_2}{2\delta}\Big),
\end{aligned}
\right.
\end{equation}
and $\bm{l}$ is the unit vector pointing from $\bm{x}_0$ to $\bm{x}_1$.

\paragraph{Static semi-coupling form.}
An equivalent static (Kantorovich) formulation of the WFR problem \eqref{Dynamic WFR} is \citep{chizat2018unbalanced,liero2018optimal}
\begin{equation}\label{eq:wfr_static}
\mathrm{WFR}_\delta^2(\mu_0,\mu_1)
=\inf_{(\gamma_0,\gamma_1)\in(\mathcal{M}_+(\mathcal{X}^2))^2}
\int_{\mathcal{X}^2}\!
\mathrm{WFR\text{-}DD}_\delta^2\big(\gamma_0(\bm{x},\bm{y})\delta_{\bm{x}},\ \gamma_1(\bm{x},\bm{y})\delta_{\bm{y}}\big)\,\mathrm{d}\bm{x}\,\mathrm{d}\bm{y},
\end{equation}
subject to
\[
\Gamma(\mu_0,\mu_1)\stackrel{\mathrm{def.}}{=}\left\{(\gamma_0,\gamma_1)\in(\mathcal{M}_+(\mathcal{X}^2))^2:
\int_\mathcal{X}\gamma_0(\bm{x},\bm{y})\,\mathrm{d}\bm{y}=\mu_0(\bm{x}),\ 
\int_\mathcal{X}\gamma_1(\bm{x},\bm{y})\,\mathrm{d}\bm{x}=\mu_1(\bm{y})\right\}.
\]

\paragraph{Optimal Entropy Transport.}
Equivalently, WFR can be cast as an Optimal Entropy Transport (OET) problem \citep{chizat2018unbalanced,liero2018optimal}:
\begin{equation}\label{eq:oet}
\begin{aligned}
\mathrm{WFR}_\delta^2(\mu_0,\mu_1)
=2\delta^2\,\inf_{\gamma\in\mathcal{M}_+(\mathcal{X}^2)}\Big\{&
\int_{\mathcal{X}^2}\!\! \big[-2\ln\cos_+\!\tfrac{\|\bm{x}-\bm{y}\|_2}{2\delta}\big]\ \gamma(\bm{x},\bm{y})\,\mathrm{d}\bm{x}\,\mathrm{d}\bm{y} \\
&\quad + \mathrm{KL}\!\left(\int\gamma(\bm{x},\bm{y})\,\mathrm{d}\bm{y}\ \Big\|\ \mu_0(\bm{x})\right)
+ \mathrm{KL}\!\left(\int\gamma(\bm{x},\bm{y})\,\mathrm{d}\bm{x}\ \Big\|\ \mu_1(\bm{y})\right)\Big\}.
\end{aligned}
\end{equation}
When $\gamma$ is the optimal coupling of Eq.~\eqref{eq:oet}, the semi-coupling
\[
\gamma_0(\bm{x},\bm{y})=\frac{\gamma(\bm{x},\bm{y})}{\int_\mathcal{X}\gamma(\bm{x},\bm{z})\,\mathrm{d}\bm{z}}\,\mu_0(\bm{x}),\qquad
\gamma_1(\bm{x},\bm{y})=\frac{\gamma(\bm{x},\bm{y})}{\int_\mathcal{X}\gamma(\bm{z},\bm{y})\,\mathrm{d}\bm{z}}\,\mu_1(\bm{y})
\]
solves the static WFR problem in Eq.~\eqref{eq:wfr_static}.
\paragraph{Conditional measure path.}
Given $\bm{z}\sim q(\bm{z})$, we consider the decoupled conditional path
\[
\rho_t(\bm{x}\mid \bm{z}) \;=\; m_t(\bm{z})\,\tilde\rho_t(\bm{x}\mid \bm{z}),
\]
with conditional dynamics
\begin{equation}\label{eq:cond_cont}
\partial_t\rho_t(\bm{x}\mid \bm{z})+\nabla_{\bm{x}}\!\cdot\!\big(\rho_t(\bm{x}\mid \bm{z})\,\bm{u}_t(\bm{x}\mid \bm{z})\big)
= g_t(\bm{x}\mid \bm{z})\,\rho_t(\bm{x}\mid \bm{z}).
\end{equation}

\paragraph{Marginalization.}
Define the marginals by
\begin{equation}\label{eq:marginal_defs}
\rho_t(\bm{x})=\int \rho_t(\bm{x}\mid \bm{z})\,q(\bm{z})\,\mathrm{d}\bm{z},\quad
\bm{u}_t(\bm{x})=\int \bm{u}_t(\bm{x}\mid \bm{z})\,\tfrac{\rho_t(\bm{x}\mid \bm{z})q(\bm{z})}{\rho_t(\bm{x})}\,\mathrm{d}\bm{z},\quad
g_t(\bm{x})=\int g_t(\bm{x}\mid \bm{z})\,\tfrac{\rho_t(\bm{x}\mid \bm{z})q(\bm{z})}{\rho_t(\bm{x})}\,\mathrm{d}\bm{z}.
\end{equation}
If $q(\bm{z})$ is independent of $(\bm{x},t)$ and Eq.~\eqref{eq:cond_cont} holds for each $\bm{z}$, then Eq.~\eqref{eq:marginal_defs} satisfies the continuity equation with source term
\begin{equation}\label{eq:cont_src}
\partial_t\rho_t(\bm{x})+\nabla_{\bm{x}}\!\cdot(\rho_t(\bm{x})\bm{u}_t(\bm{x}))=g_t(\bm{x})\,\rho_t(\bm{x}).
\end{equation}

\paragraph{Conditional Gaussian measure path.}
We take the conditional Gaussian measure path (CGMP)
\begin{equation}\label{eq:cgmp}
\tilde\rho_t(\bm{x}\mid \bm{z})=\mathcal{N}\!\big(\bm{x}\,\big|\,\bm{\eta}_t(\bm{z}),\,\bm{\sigma}_t^2(\bm{z})\big),\qquad
\rho_t(\bm{x}\mid \bm{z})=m_t(\bm{z})\,\tilde\rho_t(\bm{x}\mid \bm{z}),
\end{equation}
with explicit conditional field and rate
\begin{equation}\label{eq:cgmp_field}
\bm{u}_t(\bm{x}\mid \bm{z})=\frac{\bm{\sigma}_t'(\bm{z})}{\bm{\sigma}_t(\bm{z})}\big(\bm{x}-\bm{\eta}_t(\bm{z})\big)+\bm{\eta}_t'(\bm{z}),
\qquad
g_t(\bm{x}\mid \bm{z})=\partial_t\ln m_t(\bm{z}).
\end{equation}

\paragraph{WFR coupling solves the dynamic WFR problem.}
Consider two measures $\mu_0(\bm{x})$ and $\mu_1(\bm{x})$ as the source and target of a WFR problem, and denote by $(\gamma_0,\gamma_1)$ a semi-coupling solving the static form. Without loss of generality, let $\mu_0(\bm{x})$ be a probability density on $\mathcal{X}$, so that $\gamma_0(\bm{x}_0,\bm{x}_1)$ is a probability density on $\mathcal{X}^2$. Let $\bm{z}=(\bm{x}_0,\bm{x}_1)\sim\gamma_0(\bm{x}_0,\bm{x}_1)$ represent the source and target locations. 
Motivated by the traveling Dirac solution of the dynamic WFR problem \citep{chizat2018interpolating} and following the construction in WFR-FM \citep{peng2026wfrfm}, we adopt a special CGMP, termed the traveling Gaussian.
\begin{equation}
\label{travelling Gaussian}
\begin{aligned}
&\tilde{\rho}_t(\bm{x}\mid \bm{x}_0,\bm{x}_1)
= \mathcal{N}\!\Big(\bm{x}\,\Big|\, \bm{x}_0+\bm{\omega}_0\!\int_0^t\frac{\mathrm{d}s}{m_s(\bm{x}_0,\bm{x}_1)},\,\sigma^2\mathbf{I}\Big)
:= \mathcal{N}\!\big(\bm{x}\,\big|\,\bm{x}_0+\bm{\omega}_0\Lambda_t(\bm{x}_0,\bm{x}_1),\,\sigma^2\mathbf{I}\big)\\
&= \mathcal{N}\!\Big(\bm{x}\,\Big|\, \bm{x}_0+\frac{\bm{\omega}_0}{\sqrt{m_0 A - B^2}}
\Big( \arctan \!\frac{At - B}{\sqrt{m_0 A - B^2}} - \arctan \!\frac{-B}{\sqrt{m_0 A - B^2}} \Big),\,\sigma^2\mathbf{I}\Big),
\end{aligned}
\end{equation}
where $\bm{\omega}_0$ and $m_t(\bm{x}_0,\bm{x}_1)$ are defined by Eqs.~\eqref{travelling dirac} and \eqref{ABw}, with boundary condition $m_0(\bm{x}_0,\bm{x}_1)=1$ and $m_1(\bm{x}_0,\bm{x}_1)=\frac{\gamma_1(\bm{x}_0,\bm{x}_1)}{\gamma_0(\bm{x}_0,\bm{x}_1)}$. Under this construction, the marginal boundary measures converge to $\mu_0$ and $\mu_1$ respectively as $\sigma\to 0$, and the induced measure path follows the WFR geodesic.

\paragraph{Training of WFR-MFM.}
Under the WFR geometry, the mean-flow framework yields the following training losses for WFR-MFM.
The mean velocity and mass-growth fields $(\bm v, h)$ are parameterized by neural networks
$(\bm v_{\bm{\theta}}, h_{\bm{\phi}})$. The unbalanced mean flow matching objective is given by
\begin{equation*}
\begin{aligned}
\mathcal{L}(\bm{\theta},\bm{\phi})
=\  \mathbb{E}_{t<T,\;\bm{x}\sim\rho_t(\bm{x})}
\Big[
\| \bm{v}_{\bm{\theta}}(\bm{x},t,T) - \text{sg}(\bm{v}(\bm{x},t,T))\|_2^2
+ \lambda\,\| h_{\bm{\phi}}(\bm{x},t,T) - \text{sg}(h(\bm{x},t,T))\|_2^2
\Big],		
\end{aligned}
\end{equation*}
where $\lambda>0$ balances the supervision of the velocity and mass-growth fields, and $\text{sg}(\cdot)$ is the stop-gradient operator. 
The regression targets $(\bm v, h)$ are computed from the mean-field derivative identities:
\begin{equation}
\begin{gathered}
\bm{v}(\bm{x}, t, T )
=  \bm{u}_t(\bm{x} )
+ (T - t)\!\Bigr[
\partial_t \bm{v}_{\bm{\theta}}(\bm{x},t,T)
+ (\nabla_{\bm{x}} \bm{v}_{\bm{\theta}}(\bm{x},t,T))\,\bm{u}_t(\bm{x} )
\Bigr],\\
h(\bm{x}, t, T )
= g_t(\bm{x} )
+ (T - t)\!\Bigr[
\partial_t h_{\bm{\phi}}(\bm{x},t,T)
+ \nabla_{\bm{x}} h_{\bm{\phi}}(\bm{x},t,T)\cdot \bm{u}_t(\bm{x})
\Bigr].
\end{gathered}	
\end{equation}
For conditional training, the corresponding objective takes the form
\begin{equation*}
\begin{aligned}
\mathcal{L}_{\mathrm{c}}(\bm{\theta},\bm{\phi}) =\ &  \mathbb{E}_{t<T,\bm{z}\sim q(\bm{z}), \bm{x}\sim \tilde{\rho}_t(\bm{x}\vert \bm{z})}
\Big[
\| \bm{v}_{\bm{\theta}}(\bm{x},t,T) 
- \text{sg}(\bm{v}(\bm{x},t,T\mid\bm{z}))\|_2^2+ \lambda\,\| h_{\bm{\phi}}(\bm{x},t,T) 
- \text{sg}(h(\bm{x},t,T\mid\bm{z}))\|_2^2
\Big]\, m_{t}(\bm{z}),		
\end{aligned}
\end{equation*}
where the conditional regression targets are defined as
\begin{equation*}
\begin{aligned}
\bm{v}(\bm{x}, t, T \mid \bm{z})
=\ & \bm{u}_t(\bm{x} \mid \bm{z})
+ (T - t)\,\Bigl[
\partial_t \bm{v}_{\bm{\theta}}(\bm{x},t,T) 
+ (\nabla_{\bm{x}} \bm{v}_{\bm{\theta}}(\bm{x},t,T))\,\bm{u}_t(\bm{x} \mid \bm{z})
\Bigr],\\[0.4em]
h(\bm{x}, t, T \mid \bm{z})
= \ & g_t(\bm{x} \mid \bm{z})
+ (T - t)\,\Bigl[
\partial_t h_{\bm{\phi}}(\bm{x},t,T) 
+ \nabla_{\bm{x}} h_{\bm{\phi}}(\bm{x},t,T)\cdot \bm{u}_t(\bm{x} \mid \bm{z})
\Bigr].
\end{aligned}
\end{equation*}
The conditional targets $\bm{v}(\bm{x},t,T\mid\bm{z})$ and
$h(\bm{x},t,T\mid\bm{z})$ are defined analogously using the conditional instantaneous fields
$\bm{u}_t(\bm{x}\mid\bm{z})$ and $g_t(\bm{x}\mid\bm{z})$.

\subsection{Conditional Extension of WFR-MFM for Perturbations}
\label{Appendix: perturbation}

Many biological datasets involve multiple perturbation conditions, where a common control distribution $\mu_0$ is mapped to a family of perturbed endpoints $\{\mu_1^{\bm{c}}\}_{\bm{c}\in\mathcal{\bm{C}}}$, with $\bm{c}$ indexing the perturbation identity.  
Under the WFR formula, this perturbation induces condition-dependent transport and mass-growth dynamics, resulting in conditional fields $\bm{u}_t(\bm{x},\bm{c})$ and $g_t(\bm{x},\bm{c})$.

We extend the mean-flow formulation to this perturbation setting by conditioning the mean velocity and mean mass-growth fields on a perturbation embedding $\bm{c}$, defined as
\[
\bm{v}(\bm{x},\bm{c}, t, T)
\;=\;
\frac{1}{T - t}\int_t^T \bm{u}_\tau(\bm{x}_\tau,\bm{c})\,\mathrm{d}\tau,
\;\;\;
h(\bm{x},\bm{c}, t, T)
\;=\;
\frac{1}{T - t}\int_t^T g_\tau(\bm{x}_\tau,\bm{c})\,\mathrm{d}\tau.
\]

With this extension, we parameterize the conditional mean fields by neural networks
$\bm{v}_{\bm{\theta}}(\bm{x},\bm{c}, t, T)$ and $h_{\bm{\phi}}(\bm{x},\bm{c}, t, T)$,
and define the conditional training objective for learning perturbation-dependent mean fields as
\begin{equation*}
	\begin{aligned}
\mathcal{L}_{\mathrm{c}}(\bm{\theta},\bm{\phi})
=\mathbb{E}_{ t<T,\, \bm{c},\, \bm{z},\, \bm{x}}
\Big[
\| \bm{v}_{\bm{\theta}}(\bm{x},\bm{c},t,T) - \text{sg}(\bm{v}(\bm{x},\bm{c},t,T\mid\bm{z}))\|_2^2
+ \lambda\,\| h_{\bm{\phi}}(\bm{x},\bm{c},t,T) - \text{sg}(h(\bm{x},\bm{c},t,T\mid\bm{z}))\|_2^2
\Big]\, m_{t}(\bm{z}),
\end{aligned}
\end{equation*}
where 
\begin{equation}
\label{eq:conditional-mean-fields}
\begin{gathered}
\bm{v}(\bm{x},\bm{c},t,T\mid\bm{z})
= \bm{u}_t(\bm{x},\bm{c}\mid\bm{z})
+ (T - t)\!\left[
    \partial_t \bm{v}_{\bm{\theta}}(\bm{x},\bm{c},t,T)
    + (\nabla_{\bm{x}}\bm{v}_{\bm{\theta}}(\bm{x},\bm{c},t,T))\,\bm{u}_t(\bm{x},\bm{c}\mid\bm{z})
\right], \\[0.4em]
h(\bm{x},\bm{c},t,T\mid\bm{z})
= g_t(\bm{x},\bm{c}\mid\bm{z})
+ (T - t)\!\left[
    \partial_t h_{\bm{\phi}}(\bm{x},\bm{c},t,T)
    + \nabla_{\bm{x}} h_{\bm{\phi}}(\bm{x},\bm{c},t,T)\cdot \bm{u}_t(\bm{x},\bm{c}\mid\bm{z})
\right].
\end{gathered}
\end{equation}

Intuitively, after conditioning on the perturbation embedding $\bm{c}$, both one-step and multi-step inference follow the same update rule as in the base model; only the underlying mean fields are altered in a perturbation-specific manner. This enables WFR-MFM to support large perturbation collections within a single unified model, instead of training separate models for different perturbations. The complete training procedure is summarized in Algorithm~\ref{alg:WFR-MFM-perturb}.

\begin{algorithm}[tb]
\caption{Conditional WFR-MFM Training for Perturbations}
\label{alg:WFR-MFM-perturb}
\begin{algorithmic}
 \STATE {\bfseries Input:} Perturbation set $\mathcal C$; Control distribution $\mu_0$; perturbed distributions $\{\mu_1^c\}_{c\in\mathcal C}$; bandwidth $\sigma$; OET batch size $B$; training batch size $b$; WFR penalty $\{\delta^c\}_{c\in\mathcal C}$; condition embeddings $\{\bm{c}\}_{c\in\mathcal C}$; condition batch size $B_c$; mean fields $\bm{v}_{\bm{\theta}}(\bm{x},\bm{c},t,T)$ and $h_{\bm{\phi}}(\bm{x},\bm{c},t,T)$
\STATE {\bfseries Output:} trained $\bm{v}_{\bm{\theta}}$ and $h_{\bm{\phi}}$

\STATE {\bfseries Precompute couplings}
\FOR{$c \in \mathcal{C}$}
  \STATE $\gamma^{(c)} \gets \mathrm{OET}(\mu_0,\,\mu_1^{(c)})$
  \STATE $\displaystyle
  \gamma_0^{(c)}(\bm{x},\bm{y}) \gets
  \frac{\gamma^{(c)}(\bm{x},\bm{y})}{\int \gamma^{(c)}(\bm{x},\bm{z})\,\mathrm{d}\bm{z}}\,\mu_0(\bm{x})$
  \STATE Store $\gamma_0^{(c)}$ in memory
\ENDFOR

\WHILE{Training}

  \STATE Sample perturbation minibatch $\mathcal S \subset \mathcal C$ with $|\mathcal S| = B_c$
  \FOR{$c \in \mathcal S$}
    \STATE Retrieve $\gamma_0^{(c)}$ and embedding $\bm{c}$
    \STATE Sample $b$ pairs $(\bm{x}_0,\bm{x}_1) \sim \gamma_0^{(c)}$
    \STATE Compute travelling-WFR quantities $(A^{(c)}, B^{(c)}, \bm{\omega}_0^{(c)}, \tau^{(c)}, m_t^{(c)})$
    \STATE Sample $(t,T)$
    \STATE $\bm{\eta}_t \gets \bm{x}_0 + \bm{\omega}_0^{(c)}\,\Lambda_t(\bm{x}_0,\bm{x}_1)$
    \STATE Sample $\bm{x} \sim \mathcal{N}(\bm{\eta}_t,\sigma^2\mathbf I)$
    \STATE $\bm{u} \gets \bm{\omega}_0^{(c)} / m_t^{(c)}(\bm{x}_0,\bm{x}_1)$
    \STATE $g \gets \tfrac{d}{dt}\ln m_t^{(c)}(\bm{x}_0,\bm{x}_1)$
    \STATE $m \gets m_t^{(c)}(\bm{x}_0,\bm{x}_1) / \gamma_0^{(c)}(\bm{x}_0,\bm{x}_1)$
    \STATE $\bm{v}^{(c)} \gets \bm{u} +\text{sg}\!\left( (T{-}t)\left[
      \partial_t \bm{v}_{\bm{\theta}}(\bm{x},\bm{c},t,T)
      + (\nabla_{\bm{x}}\bm{v}_{\bm{\theta}}(\bm{x},\bm{c},t,T))\bm{u}
    \right]\right)$
    \STATE $h^{(c)} \gets g + \text{sg}\!\left( (T{-}t)\left[
      \partial_t h_{\bm{\phi}}(\bm{x},\bm{c},t,T)
      + \nabla_{\bm{x}} h_{\bm{\phi}}(\bm{x},\bm{c},t,T)\!\cdot\!\bm{u}
    \right]\right)$
    \STATE Collate samples into batched tensors $\{\bm{x}^b, t^b, T^b, \bm{v}^b, h^b, m^b\}$.
    \STATE $\mathcal{L}_c(\bm{\theta},\bm{\phi}) \gets
      \Big(\|\bm{v}_{\bm{\theta}}(\bm{x}^b,\bm{c},t^b,T^b) - \bm{v}^b\|_2^2
      + \lambda\,\|h_{\bm{\phi}}(\bm{x}^b,\bm{c},t^b,T^b) - h^{b}\|_2^2\Big)\, m^b$
  \ENDFOR
  \STATE $\mathcal{L} \gets \frac{1}{B_c}\sum_{c\in\mathcal S}\mathcal{L}_c$
  \STATE Update $(\bm{\theta},\bm{\phi})$ using $\nabla \mathcal{L}$
\ENDWHILE
\STATE \textbf{return} $\bm{v}_{\bm{\theta}},\, h_{\bm{\phi}}$
\end{algorithmic}
\end{algorithm}

\section{Proofs}
\label{sec:appendix-proofs}
\subsection{Proof of Theorem \ref{thm1}}\label{proof: thm1}
\textbf{Theorem \ref{thm1}.}
\textit{If $\rho_t(\bm{x}) > 0$ for all $\bm{x} \in \mathcal{X}$ and $t \in [0,1],$ and $q(\bm{z})$ is independent of $(\bm{x},t,T)$, then
\begin{equation}\label{eq:appendix thm1}
\begin{aligned}
\mathcal{L}(\bm{\theta},\bm{\phi})
= \mathcal{L}_{\mathrm{c}}(\bm{\theta},\bm{\phi}) + C,
\end{aligned}
\end{equation}
for some constant $C$ independent of $(\bm{\theta},\bm{\phi})$. Consequently,
\[
\nabla_{\bm{\theta},\bm{\phi}}\mathcal{L}(\bm{\theta},\bm{\phi})
= \nabla_{\bm{\theta},\bm{\phi}}\mathcal{L}_{\mathrm{c}}(\bm{\theta},\bm{\phi}).
\]}

\begin{proof}
We first restate the unconditional and conditional mean flow matching losses.
The \textbf{unconditional mean flow matching loss} is defined as
\[
\mathcal{L}(\bm{\theta},\bm{\phi})
= \mathbb{E}_{t<T,\;\bm{x}\sim\rho_t(\bm{x})}
\Big[
\| \bm{v}_{\bm{\theta}}(\bm{x},t,T) - \text{sg}(\bm{v}(\bm{x},t,T))\|_2^2
+ \lambda\,\| h_{\bm{\phi}}(\bm{x},t,T) - \text{sg}(h(\bm{x},t,T))\|_2^2
\Big],
\]
where
\begin{equation}\label{aeq1}
\begin{aligned}
\bm{v}(\bm{x}, t, T )
&= \bm{u}_t(\bm{x})
+ (T - t)\!\left[
\partial_t \bm{v}_{\bm{\theta}}(\bm{x},t,T)
+ (\nabla_{\bm{x}} \bm{v}_{\bm{\theta}}(\bm{x},t,T))\,\bm{u}_t(\bm{x})
\right],\\[0.3em]
h(\bm{x}, t, T )
&= g_t(\bm{x})
+ (T - t)\!\left[
\partial_t h_{\bm{\phi}}(\bm{x},t,T)
+ \nabla_{\bm{x}} h_{\bm{\phi}}(\bm{x},t,T)\cdot\bm{u}_t(\bm{x})
\right].
\end{aligned}
\end{equation}

The corresponding \textbf{conditional mean flow matching loss} is
\[
\mathcal{L}_{\mathrm{c}}(\bm{\theta},\bm{\phi})
= \mathbb{E}_{t<T,\bm{z}\sim q(\bm{z}), \bm{x}\sim\rho_t(\bm{x})}
\Big[
\| \bm{v}_{\bm{\theta}}(\bm{x},t,T) - \text{sg}(\bm{v}(\bm{x},t,T\mid\bm{z}))\|_2^2
+ \lambda\,\| h_{\bm{\phi}}(\bm{x},t,T) - \text{sg}(h(\bm{x},t,T\mid\bm{z}))\|_2^2
\Big]\, m_{t}(\bm{z}),
\]
where $\text{sg}(\cdot)$ denotes the stop-gradient operator, preventing gradient flow through the target functions. The conditional targets are defined via derivative identities:
\[
\begin{aligned}
\bm{v}(\bm{x}, t, T \mid \bm{z})
&= \bm{u}_t(\bm{x} \mid \bm{z})
+ (T - t)\!\left[
\partial_t \bm{v}_{\bm{\theta}}(\bm{x},t,T)
+ (\nabla_{\bm{x}} \bm{v}_{\bm{\theta}}(\bm{x},t,T))\,\bm{u}_t(\bm{x} \mid \bm{z})
\right],\\[0.4em]
h(\bm{x}, t, T \mid \bm{z})
&= g_t(\bm{x} \mid \bm{z})
+ (T - t)\!\left[
\partial_t h_{\bm{\phi}}(\bm{x},t,T)
+ \nabla_{\bm{x}} h_{\bm{\phi}}(\bm{x},t,T)\cdot \bm{u}_t(\bm{x} \mid \bm{z})
\right].
\end{aligned}
\]
We prove Eq.~\eqref{eq:appendix thm1} in 5 steps.

\paragraph{Step 1. Expansion of the unconditional $\bm{v}$-related loss.} 
The component of $\mathcal{L}(\bm{\theta},\bm{\phi})$ associated with $\bm{v}$ can be expanded as
\begin{equation}\label{aeq2}
\begin{aligned}
&\mathbb{E}_{t<T,\;\bm{x}\sim\rho_t(\bm{x})}
\big\| \bm{v}_{\bm{\theta}}(\bm{x},t,T) - \text{sg}(\bm{v}(\bm{x},t,T))\big\|_2^2 \\[0.3em]
&= \mathbb{E}_{t<T,\;\bm{x}\sim\rho_t(\bm{x})}
\Big[
\|\bm{v}_{\bm{\theta}}(\bm{x},t,T)\|_2^2
- 2\langle \bm{v}_{\bm{\theta}}(\bm{x},t,T),\, \text{sg}(\bm{v}(\bm{x},t,T)) \rangle
+ \|\text{sg}(\bm{v}(\bm{x},t,T))\|_2^2
\Big] \\[0.3em]
&= \mathbb{E}_{t<T,\;\bm{x}\sim\rho_t(\bm{x})}\!
\|\bm{v}_{\bm{\theta}}(\bm{x},t,T)\|_2^2
- 2\,\mathbb{E}_{t<T,\;\bm{x}\sim\rho_t(\bm{x})}\!
\langle \bm{v}_{\bm{\theta}}(\bm{x},t,T),\, \text{sg}(\bm{v}(\bm{x},t,T)) \rangle
+ \tilde{C}_1,
\end{aligned}
\end{equation}
where $\tilde{C}_1 = \mathbb{E}_{t<T,\;\bm{x}\sim\rho_t(\bm{x})}\|\text{sg}(\bm{v}(\bm{x},t,T))\|_2^2$ is constant with respect to $\bm{\theta}$.

\paragraph{Step 2. Transformation of the cross term.}
We next analyze the cross term in~Eq.~\eqref{aeq2}:
\begin{equation}\label{aeq3}
\begin{aligned}
&\mathbb{E}_{t<T,\;\bm{x}\sim\rho_t(\bm{x})}
\!\big\langle \bm{v}_{\bm{\theta}}(\bm{x},t,T),\, \text{sg}(\bm{v}(\bm{x},t,T)) \big\rangle \\[0.3em]
&\overset{(1)}{=}
\mathbb{E}_{t<T,\;\bm{x}\sim\rho_t(\bm{x})}
\Big\langle \bm{v}_{\bm{\theta}}(\bm{x},t,T),\,
\text{sg}\!\left(
\bm{u}_t(\bm{x})
+ (T-t)\!\left[
\partial_t \bm{v}_{\bm{\theta}}(\bm{x},t,T)
+ (\nabla_{\bm{x}} \bm{v}_{\bm{\theta}}(\bm{x},t,T))\,\bm{u}_t(\bm{x})
\right]
\right)
\Big\rangle \\[0.4em]
&= \mathbb{E}_{t<T}\!\int_{\mathcal{X}}
\Big\langle \bm{v}_{\bm{\theta}}(\bm{x},t,T),\,
\text{sg}\!\left(
\bm{u}_t(\bm{x})\rho_t(\bm{x})
+ (T-t)\!\left[
\partial_t \bm{v}_{\bm{\theta}}(\bm{x},t,T)\rho_t(\bm{x})
+ (\nabla_{\bm{x}} \bm{v}_{\bm{\theta}}(\bm{x},t,T))\,\bm{u}_t(\bm{x})\rho_t(\bm{x})
\right]
\right)
\Big\rangle d\bm{x} \\[0.4em]
&\overset{(2)}{=}
\mathbb{E}_{t<T}\!\int_{\mathcal{X}}\!\int_{\mathcal{Z}}
\Big\langle \bm{v}_{\bm{\theta}}(\bm{x},t,T),\,
\text{sg}\!\Big(
\bm{u}_t(\bm{x}\!\mid\! \bm{z})
+ (T-t)\!\big[
\partial_t \bm{v}_{\bm{\theta}}(\bm{x},t,T)
+ (\nabla_{\bm{x}} \bm{v}_{\bm{\theta}}(\bm{x},t,T))\,\bm{u}_t(\bm{x}\!\mid\! \bm{z})
\big]
\Big)\!\Big\rangle
\rho_t(\bm{x}\!\mid\! \bm{z})q(\bm{z})\,d\bm{z}\,d\bm{x} \\[0.4em]
&= \mathbb{E}_{t<T,\;\bm{z}\sim q(\bm{z}),\;\bm{x}\sim\rho_t(\bm{x}\mid \bm{z})}
\!\big\langle \bm{v}_{\bm{\theta}}(\bm{x},t,T),\, \text{sg}(\bm{v}(\bm{x},t,T\mid \bm{z})) \big\rangle,
\end{aligned}
\end{equation}
where  
(1) applies the definition of $\bm{v}(\bm{x},t,T)$ from~Eq.~\eqref{aeq1}, and  
(2) uses the marginal flow velocity
\[
\bm{u}_t(\bm{x})
= \frac{\displaystyle\int \bm{u}_t(\bm{x}\mid \bm{z})\,\rho_t(\bm{x}\mid \bm{z})\,q(\bm{z})\,d\bm{z}}
       {\rho_t(\bm{x})}
\]
and
\begin{equation*}
	\begin{gathered}
\rho_t(\bm{x})=\int\rho_t(\bm{x}\vert \bm{z})q(\bm{z})\mathrm{d} \bm{z}.	
	\end{gathered}
\end{equation*}

\paragraph{Step 3. Conversion to the conditional expectation.}
Substituting~Eq.~\eqref{aeq3} into~Eq.~\eqref{aeq2}, we obtain
\begin{equation}\label{aeq4}
\begin{aligned}
&\mathbb{E}_{t<T,\;\bm{x}\sim\rho_t(\bm{x})}
\big\| \bm{v}_{\bm{\theta}}(\bm{x},t,T) - \text{sg}(\bm{v}(\bm{x},t,T))\big\|_2^2 \\[0.3em]
&=
\mathbb{E}_{t<T,\;\bm{z}\sim q(\bm{z}),\;\bm{x}\sim\rho_t(\bm{x}\mid \bm{z})}
\|\bm{v}_{\bm{\theta}}(\bm{x},t,T)\|_2^2
- 2\,\mathbb{E}_{t<T,\;\bm{z}\sim q(\bm{z}),\;\bm{x}\sim\rho_t(\bm{x}\mid \bm{z})}
\!\big\langle \bm{v}_{\bm{\theta}}(\bm{x},t,T),\, \text{sg}(\bm{v}(\bm{x},t,T\mid \bm{z})) \big\rangle
+ \tilde{C}_1 \\[0.5em]
&= \mathbb{E}_{t<T,\;\bm{z}\sim q(\bm{z}),\;\bm{x}\sim\rho_t(\bm{x}\mid \bm{z})}
\Big[
\|\bm{v}_{\bm{\theta}}(\bm{x},t,T)\|_2^2
- 2\big\langle \bm{v}_{\bm{\theta}}(\bm{x},t,T),\, \text{sg}(\bm{v}(\bm{x},t,T\mid \bm{z})) \big\rangle
\Big]
+ \tilde{C}_1 \\[0.5em]
&= \mathbb{E}_{t<T,\;\bm{z}\sim q(\bm{z}),\;\bm{x}\sim \rho_t(\bm{x}\mid \bm{z})}\;
\!\big\| \bm{v}_{\bm{\theta}}(\bm{x},t,T) - \text{sg}(\bm{v}(\bm{x},t,T\mid \bm{z}))\big\|_2^2
 + C_1,
\end{aligned}
\end{equation}
where $C_1$ is a constant independent of $(\bm{\theta},\bm{\phi})$.

\paragraph{Step 4. Extension to the mass-growth rate field.}
A similar derivation holds for the $h$-term:
\begin{equation}\label{aeq5}
\mathbb{E}_{t<T,\;\bm{x}\sim\rho_t(\bm{x})}
\lambda\,\| h_{\bm{\phi}}(\bm{x},t,T) - \text{sg}(h(\bm{x},t,T))\|_2^2
= \mathbb{E}_{t<T,\;\bm{z}\sim q(\bm{z}),\;\bm{x}\sim\rho_t(\bm{x}\mid \bm{z})}
\lambda\,\| h_{\bm{\phi}}(\bm{x},t,T) - \text{sg}(h(\bm{x},t,T\mid \bm{z}))\|_2^2\, 
+ C_2.	
\end{equation}

\paragraph{Step 5. Conclusion.}
Combining Eqs.~\eqref{aeq4} and \eqref{aeq5}, we conclude that
\[
\mathcal{L}(\bm{\theta},\bm{\phi})
= \mathcal{L}_{\mathrm{c}}(\bm{\theta},\bm{\phi}) + C,
\]
for some constant $C$ independent of $(\bm{\theta},\bm{\phi})$.  
Hence,
\[
\nabla_{\bm{\theta},\bm{\phi}}\mathcal{L}(\bm{\theta},\bm{\phi})
= \nabla_{\bm{\theta},\bm{\phi}}\mathcal{L}_{\mathrm{c}}(\bm{\theta},\bm{\phi}),
\]
which completes the proof.
\end{proof}

\subsection{Failure of Time-Averaged Conditional Mean Fields}
\label{proof: thm2}
This appendix shows that redefining the learning objective by
direct time averaging along trajectories does \emph{not} preserve
the equivalence between the unconditional and conditional objectives
established in Theorem~\ref{thm1}. Concretely, consider the following alternative
definitions of the mean-field targets:
\begin{equation}\label{appendix eq:target_wrong}
\begin{gathered}
\bm{v}(\bm{x}, t, T)=
\frac{1}{T - t}\int_t^T \bm{u}_\tau(\bm{x}_\tau)\,d\tau,\;\;\;
h(\bm{x}, t, T)=
\frac{1}{T - t}\int_t^T g_\tau(\bm{x}_\tau)\,d\tau,\\
\bm{v}(\bm{x}, t, T\mid \bm{z}) = \frac{1}{T - t}\int_t^T \bm{u}_\tau(\bm{x}_\tau \mid \bm{z})\,\mathrm{d}\tau,\;\;\;
h(\bm{x}, t, T\mid \bm{z})= \frac{1}{T - t}\int_t^T g_\tau(\bm{x}_\tau \mid \bm{z})\,\mathrm{d}\tau.
\end{gathered}
\end{equation}

\begin{proposition}[Failure of equivalence under time-averaged targets]
\label{prop:failure_time_avg}
Consider the alternative (time-averaged) targets defined in Eq.~\eqref{appendix eq:target_wrong}. Then, in general, the equivalence in Theorem~\ref{thm1} fails:
there exist a coupling distribution $q(\bm{z})$ and conditional measure paths
$\rho_t(\cdot \mid \bm{z})$ such that there is no constant
$C$ independent of $(\bm{\theta},\bm{\phi})$ for which
\[
\mathcal{L}(\bm{\theta}, \bm{\phi})
=
\mathcal{L}_{\mathrm{c}}(\bm{\theta}, \bm{\phi}) + C.
\]
Consequently, the minimization problems induced by
$\mathcal{L}_{\mathrm{c}}$ and $\mathcal{L}$ are not equivalent in general.
\end{proposition}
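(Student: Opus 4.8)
The plan is to isolate the single marginalization identity on which Theorem~\ref{thm1} rests, show that the time-averaged targets of Eq.~\eqref{appendix eq:target_wrong} obey a different, genuinely nonlinear law, and then break that identity with an explicit two-component example.

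First I would note that, unlike the derivative-identity targets of Theorem~\ref{thm1}, the time-averaged targets do \emph{not} depend on $(\bm{v}_{\bm\theta},h_{\bm\phi})$, so $\mathrm{sg}(\cdot)$ is inert and both $\mathcal{L}$ and $\mathcal{L}_{\mathrm c}$ are plain $L^2$ regressions against fixed targets. Expanding the squared norms and using $\int\rho_t(\bm{x}\mid\bm{z})q(\bm{z})\,\mathrm{d}\bm{z}=\rho_t(\bm{x})$ from Eq.~\eqref{eq:marginal_defs}, the $\|\bm{v}_{\bm\theta}\|_2^2$ and $\|h_{\bm\phi}\|_2^2$ terms coincide across the two objectives while the target-norm terms are $(\bm\theta,\bm\phi)$-free constants. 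With $\bar{\bm{v}}(\bm{x},t,T):=\rho_t(\bm{x})^{-1}\!\int \bm{v}(\bm{x},t,T\mid\bm{z})\,\rho_t(\bm{x}\mid\bm{z})q(\bm{z})\,\mathrm{d}\bm{z}$ denoting the density-weighted average of the conditional targets (and $\bar h$ analogously), the difference collapses to an affine functional of the parameters,
\[
\mathcal{L}-\mathcal{L}_{\mathrm c}
= -2\,\mathbb{E}_{t<T,\ \bm{x}\sim\rho_t}\Big[\big\langle \bm{v}_{\bm\theta},\,\bm{v}-\bar{\bm{v}}\big\rangle
+ \lambda\,\big\langle h_{\bm\phi},\,h-\bar h\big\rangle\Big] + C_0,
\]
with $C_0$ parameter-free. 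Hence $\mathcal{L}=\mathcal{L}_{\mathrm c}+C$ for a parameter-independent $C$ iff the linear part vanishes for all admissible parameters, which for any sufficiently expressive family forces $\bm{v}=\bar{\bm{v}}$ and $h=\bar h$ $\rho_t$-a.e. This plays the role of the identity $\bm{u}_t(\bm{x})\rho_t(\bm{x})=\int \bm{u}_t(\bm{x}\mid\bm{z})\rho_t(\bm{x}\mid\bm{z})q(\bm{z})\,\mathrm{d}\bm{z}$ in Theorem~\ref{thm1}; the remaining task is to show it can fail.

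The obstruction is conceptual: $\bm{v}(\bm{x},t,T)$ averages the \emph{marginal} field $\bm{u}_\tau$ along the \emph{marginal} flow $\dot{\bm{x}}_\tau=\bm{u}_\tau(\bm{x}_\tau)$, whereas $\bar{\bm{v}}$ mixes averages taken along the \emph{conditional} flows $\dot{\bm{x}}_\tau=\bm{u}_\tau(\bm{x}_\tau\mid\bm{z})$. Because $\bm{u}_\tau$ is a density-weighted blend of the $\bm{u}_\tau(\cdot\mid\bm{z})$ whose weights drift in $\tau$, and because the marginal and conditional trajectories separate as $\tau$ grows, time-integration does not commute with marginalization---exactly the failure that the instantaneous, \emph{linear} derivative-identity target of Theorem~\ref{thm1} avoids. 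To make this quantitative with minimal computation I would expand both targets for a short horizon $T-t=\varepsilon$: at order $\varepsilon^0$ both equal $\bm{u}_t(\bm{x})$ and agree, while writing $D_t^{\mathrm m}\bm u:=(\partial_t+\bm{u}_t\!\cdot\!\nabla_{\bm{x}})\bm u$ and $D_t^{\bm z}\bm u:=(\partial_t+\bm{u}_t(\cdot\mid\bm{z})\!\cdot\!\nabla_{\bm{x}})\bm u$ for the material derivatives along the two flows, the order-$\varepsilon$ gap is
\[
\bm{v}-\bar{\bm{v}}
= \tfrac{\varepsilon}{2}\Big[D_t^{\mathrm m}\bm{u}_t(\bm{x})
- \rho_t(\bm{x})^{-1}\!\!\int D_t^{\bm z}\bm{u}_t(\bm{x}\mid\bm{z})\,\rho_t(\bm{x}\mid\bm{z})q(\bm{z})\,\mathrm{d}\bm{z}\Big]+O(\varepsilon^2),
\]
the marginal material derivative minus the weighted average of the conditional ones; since advection is quadratic in velocity, these differ whenever the conditional velocities disagree.

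Finally I would exhibit a concrete instance. Take $d=1$, the mass-preserving case ($g\equiv0$, hence $h\equiv0$ and every $h$-term matches trivially), and $\bm{z}\in\{\bm{z}_1,\bm{z}_2\}$ with equal probability, using constant-variance Gaussian conditional paths (the CGMP of Eq.~\eqref{eq:cgmp}) with linear means; by Eq.~\eqref{eq:cgmp_field} the conditional velocities are then the constants $\bm{u}_\tau(\cdot\mid\bm{z}_i)\equiv x_1^{(i)}-x_0^{(i)}$, which I set to $+1$ and $-1$ with endpoints chosen so the two populations cross (centered at $\mp a$ and moving toward each other). Each $D_t^{\bm z}\bm{u}_t(\cdot\mid\bm{z}_i)$ then vanishes, so the $\varepsilon$-gap reduces to $\tfrac{\varepsilon}{2}D_t^{\mathrm m}\bm{u}_t(x)$ with marginal field $\bm{u}_t(x)=\big(\rho_t(x\mid\bm{z}_1)-\rho_t(x\mid\bm{z}_2)\big)/\rho_t(x)$, a nonconstant sigmoidal profile that is advected and time-varying; hence $D_t^{\mathrm m}\bm{u}_t\neq0$ on a set of positive $\rho_t$-measure, so $\bm{v}\neq\bar{\bm{v}}$, the linear functional above is nonzero for some $\bm{v}_{\bm\theta}$, and no parameter-independent $C$ can make $\mathcal{L}-\mathcal{L}_{\mathrm c}$ constant. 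The main obstacle is precisely this last certification of $\bm{v}\neq\bar{\bm{v}}$ without solving the marginal flow globally; the short-horizon expansion sidesteps it by reducing the claim to the elementary fact that the marginal field is advected nontrivially while the constant conditional fields are not.
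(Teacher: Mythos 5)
Your proposal is correct, and it is in fact strictly more complete than the paper's own argument, though the two share the same starting reduction. Both you and the paper expand the squared norms, use the marginalization identity $\int\rho_t(\bm{x}\mid\bm{z})q(\bm{z})\,\mathrm{d}\bm{z}=\rho_t(\bm{x})$ to match the quadratic and constant terms, and localize the discrepancy entirely in the cross terms (the paper's Eqs.~\eqref{appendix eq: cross_l} and \eqref{appendix eq: cross_cl}); your observation that the stop-gradient is inert for time-averaged targets also matches the paper, which silently drops $\mathrm{sg}(\cdot)$ in the proposition's losses. The routes then diverge. The paper rewrites the unconditional cross term via the marginal-velocity identity and simply \emph{observes} that the mixture weights $\rho_\tau(\cdot\mid\bm{z})/\rho_\tau(\cdot)$ appear at the running time $\tau$ inside the time integral, whereas the conditional cross term carries weights frozen at the initial time $t$; it concludes non-coincidence ``in general'' without exhibiting any instance --- which is a formal mismatch argument, not strictly a proof of the existential statement in Proposition~\ref{prop:failure_time_avg}. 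You instead compress the mismatch into the pointwise identity $\bm{v}=\bar{\bm{v}}$ (density-weighted average of the conditional targets), note that equivalence holds iff this identity holds $\rho_t$-a.e.\ for expressive families, and then certify its failure with a concrete construction: two crossing constant-velocity Gaussian components, for which the conditional material derivatives vanish while the marginal field is a moving sigmoidal profile (explicitly $\bm{u}_t(x)=\tanh\bigl((2t-1)x/(2\sigma^2)\bigr)$ in your setup) whose material derivative is nonzero on a set of positive measure; the short-horizon expansion at order $T-t$ then yields a nonvanishing gap. What your route buys is a genuine counterexample, hence a complete proof of the ``there exist'' claim; what it costs is mild extra bookkeeping --- smoothness to justify the expansion, and an expressivity assumption on the network family so that a nonzero linear functional $\bm{v}_{\bm\theta}\mapsto\mathbb{E}\langle\bm{v}_{\bm\theta},\bm{v}-\bar{\bm{v}}\rangle$ is non-constant over admissible parameters --- assumptions the paper's purely formal comparison never has to state.
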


\begin{proof}
We begin by stating the definitions of the unconditional loss and the corresponding conditional loss.
\begin{equation*}
\begin{aligned}
\mathcal{L}(\bm{\theta},\bm{\phi})
&=\mathbb{E}_{t<T,\;\bm{x}\sim\rho_t(\bm{x})}
\Big[
\| \bm{v}_{\bm{\theta}}(\bm{x},t,T) - \bm{v}(\bm{x},t,T)\|_2^2
+ \lambda\,\| h_{\bm{\phi}}(\bm{x},t,T) - h(\bm{x},t,T)\|_2^2
\Big],\\		
\mathcal{L}_{\mathrm{c}}(\bm{\theta},\bm{\phi}) 
&= \mathbb{E}_{t<T,\bm{z}, \bm{x}\sim \rho_t(\bm{x}\vert \bm{z})}
\Big[
\| \bm{v}_{\bm{\theta}}(\bm{x},t,T) 
- \bm{v}(\bm{x},t,T\mid\bm{z})\|_2^2+ \lambda\,\| h_{\bm{\phi}}(\bm{x},t,T) 
- h(\bm{x},t,T\mid\bm{z})\|_2^2
\Big].		
\end{aligned}
\end{equation*}

Without loss of generality, we consider the velocity field $\bm{v}$;
the treatment of $h$ is analogous. The inequivalence between $\mathcal{L}$ and $\mathcal{L}_{\mathrm{c}}$ is already reflected in the cross terms related to the velocity field $\bm{v}$. All remaining terms can be treated analogously to the proof of Appendix~\ref{proof: thm1} and are therefore omitted.

We first consider the cross term appearing in the unconditional loss:
\begin{equation}\label{appendix eq: cross_l}
\begin{aligned}
&\mathbb{E}_{t<T,\;\bm{x}\sim\rho_t(\bm{x})}
\!\big\langle \bm{v}_{\bm{\theta}}(\bm{x},t,T),\, \bm{v}(\bm{x},t,T) \big\rangle \\
& = \mathbb{E}_{t<T,\;\bm{x}\sim\rho_t(\bm{x})}
\!\big \langle \bm{v}_{\bm{\theta}}(\bm{x},t,T),\, \frac{1}{T - t}\int_t^T \bm{u}_\tau(\bm{x}_\tau)\,d\tau
 \big \rangle \\
&= \mathbb{E}_{t<T}\!\int_{\mathcal{X}} \!\big \langle \bm{v}_{\bm{\theta}}(\bm{x},t,T),\, \frac{1}{T - t}\int_t^T \bm{u}_\tau(\bm{x}_\tau)\,d\tau \rho_t(\bm{x})
 \big \rangle \,\mathrm{d}\bm{x} \\
&\overset{(1)}{=} \mathbb{E}_{t<T}\!\int_{\mathcal{X}} \!\big \langle \bm{v}_{\bm{\theta}}(\bm{x},t,T),\, \frac{1}{T - t}\int_t^T \frac{\displaystyle \int_{\mathcal{Z}} \bm{u}_\tau(\bm{x}\mid \bm{z})\,\rho_\tau(\bm{x}\mid \bm{z})\,q(\bm{z})\,d\bm{z}}
       {\rho_\tau(\bm{x})}
\,d\tau \rho_t(\bm{x})
 \big \rangle \,\mathrm{d}\bm{x} \\
&\overset{(2)}{=} \mathbb{E}_{t<T}\! \int_{\mathcal{Z}} \int_{\mathcal{X}} \!\big \langle \bm{v}_{\bm{\theta}}(\bm{x},t,T),\, \frac{1}{T - t}\int_t^T \frac{\displaystyle \bm{u}_\tau(\bm{x}\mid \bm{z})\,\rho_\tau(\bm{x}\mid \bm{z})\,\,}
       {\rho_\tau(\bm{x})}
\,d\tau  \big \rangle  
 q(\bm{z})\rho_t(\bm{x}\mid \bm{z})
\,\mathrm{d}\bm{x}\,\mathrm{d}\bm{z},
\end{aligned}
\end{equation} 
where  
(1) applies the definition of the marginal flow velocity
\begin{equation*}
\begin{gathered}
\bm{u}_t(\bm{x})
= \frac{\displaystyle\int \bm{u}_t(\bm{x}\mid \bm{z})\,\rho_t(\bm{x}\mid \bm{z})\,q(\bm{z})\,d\bm{z}}
       {\rho_t(\bm{x})},\;\;\;
\rho_t(\bm{x})=\int\rho_t(\bm{x}\vert \bm{z})q(\bm{z})\mathrm{d} \bm{z}.
\end{gathered}
\end{equation*}
Then we consider the corresponding cross term in the conditional objective:
\begin{equation}\label{appendix eq: cross_cl}
\begin{aligned}
&\mathbb{E}_{t<T,\;\bm{z},\;\bm{x}\sim {\rho}_t(\bm{x}\mid\bm{z})}
\Big\langle 
\bm{v}_{\bm{\theta}}(\bm{x},t,T),\,
\bm{v}(\bm{x},t,T\mid\bm{z})
\Big\rangle\, \\[0.4em]
&= 
\mathbb{E}_{t<T,\;\bm{z},\;\bm{x}\sim {\rho}_t(\bm{x}\mid\bm{z})}
\Big\langle 
\bm{v}_{\bm{\theta}}(\bm{x},t,T),\,
\frac{1}{T-t}\int_t^T \bm{u}_\tau(\bm{x}\mid\bm{z})\,\mathrm{d}\tau
\Big\rangle\, \\[0.4em]
&= 
\mathbb{E}_{t<T}
\int_{\mathcal{Z}}\!\int_{\mathcal{X}}
\Big\langle 
\bm{v}_{\bm{\theta}}(\bm{x},t,T),\,
\frac{1}{T-t}\int_t^T \bm{u}_\tau(\bm{x}\mid\bm{z})\,\mathrm{d}\tau
\Big\rangle \,
q(\bm{z})\,\rho_t(\bm{x}\mid\bm{z})
\,\mathrm{d}\bm{x}\,\mathrm{d}\bm{z}.
\end{aligned}
\end{equation}

Comparing Eqs.~\eqref{appendix eq: cross_l} and~\eqref{appendix eq: cross_cl}, we observe that,
under the time-averaged definition, the two cross terms no longer coincide
in general. This establishes the failure of equivalence between
$\mathcal{L}$ and $\mathcal{L}_{\mathrm{c}}$.
\end{proof}

\section{Additional Results}
In this section, we present experimental details, training details, evaluation metrics, dataset descriptions, and additional results. Most of the datasets, evaluation metrics, and experimental protocols are shared with WFR-FM \citep{peng2026wfrfm}; for completeness and self-containment, we restate the relevant information here.

\subsection{Experimental Details}
All experiments were conducted on a local workstation equipped with an NVIDIA RTX 4070 Ti Super GPU and an Intel i7-12700KF CPU, except for the scalability evaluation on the 100D EB dataset, which was performed on a shared cluster with NVIDIA A100 GPUs and 128 CPU cores. 
The architecture of the neural networks for $\bm{v}_{\theta}(\bm{x}, t, T)$ and $g_{\bm{\phi}}(\bm{x}, t, T)$ are implemented using 5-layer Multilayer Perceptrons with 256 hidden units per layer and LeakyReLU activations. These networks were optimized using Pytorch \citep{pytorch}. The OET problem is solved using the Python Optimal Transport (POT) package \citep{flamary2021pot}.

\subsection{Training Details}
\paragraph{Sampling Time.}
We describe the procedure used to sample time pairs $(t, T)$ that define the temporal span for evaluating the mean fields.
In this work, time pairs are sampled from a uniform distribution.
To ensure consistency with FM and proper boundary behavior, a fraction of samples are enforced to satisfy $t = T$, corresponding to the instantaneous-velocity case, as commonly adopted in prior work~\citep{geng2025mean}. This mixture of instantaneous and interval-based samples improves training stability and generation quality.
The detailed sampling procedure is summarized in Algorithm~\ref{alg:time_sampling}.

\begin{algorithm}[h]
\caption{Sampling time pairs $(t, T)$}
\label{alg:time_sampling}
\begin{algorithmic}[1]
\STATE \textbf{Input:} batch size $B$, proportion $p_{\text{diff}}$ of samples with $t \neq T$
\STATE \textbf{for each sample} do
\STATE \quad Draw $z \sim \mathrm{Bernoulli}(1 - p_{\text{diff}})$
\IF{$z = 1$} 
   \STATE Sample $t \sim \mathcal{U}(0,1)$
   \STATE $T \gets t$
\ELSE
   \STATE Sample $t_1, t_2 \sim \mathcal{U}(0,1)$
   \STATE $t \gets \min(t_1, t_2)$,\quad $T \gets \max(t_1, t_2)$
\ENDIF
\STATE \textbf{return} $\{(t, T)\}_{i=1}^{B}$
\end{algorithmic}
\end{algorithm}

\subsection{Evaluation Metrics}
\label{metrics}
We evaluate model performance using two metrics: the 1-Wasserstein distance ($\mathcal{W}_1$),  which measures the similarity between predicted and true distributions, and the Relative Mass Error (RME), which assesses how well the model captures cell population growth. The metrics are defined as:
\begin{equation*}
\begin{gathered}
	\mathcal{W}_1(p, q) = \min_{\pi \in \Pi(p,q)} \int \|\bm{x} - \bm{y}\|_2 d\pi(\bm{x}, \bm{y}),\\
	\text{RME}(t_k) = \frac{\left| \sum_i{w_i(t_k)} - n_k/n_0 \right|}{n_k/n_0}.
\end{gathered}	
\end{equation*}
Here, $p$ and $q$ denote the empirical distributions of predicted and observed cells, respectively, $w_i(t_k)$ represents the inferred mass associated with cell $i$ at time $t_k$, and $n_k$ denotes the number of observed cells at time point $k$.

For evaluation, we propagate the learned dynamics from the initial cell population, where all cells are initialized with equal weights $w_i(0)=1/n_0$, to generate predicted cell states at later time points. When an unbalanced formulation is employed, cell weights are evolved jointly with the state dynamics; otherwise, weights remain uniform throughout. We then compute the weighted $\mathcal{W}_1$ distance and the RME by comparing the predicted distributions against the observed data at each time point.
For selected datasets, we further conduct a hold-out evaluation by excluding one time point during training and reporting the $\mathcal{W}_1$ distance on the unseen snapshot. To ensure a fair comparison across methods, we reimplemented TIGON \citep{TIGON} to mitigate numerical instabilities observed in the original implementation. For the remaining baselines, we largely follow the default configurations reported in their respective papers when applicable; otherwise, we adjust network widths to match parameter scales and tune training epochs and learning rates for each dataset to ensure balanced comparisons.

\subsection{Results on held-out time points}
We report quantitative results on held-out time points for real biological datasets, including EMT, EB, CITE, and Mouse.
For each dataset, we perform an evaluation over intermediate time points by holding out one time point at a time for testing, while using all remaining time points for training.
Performance is evaluated on the held-out time point using the $\mathcal{W}_1$ metric, and the final results are obtained by averaging over all intermediate time points.
The mean $\mathcal{W}_1$ scores are summarized in Table~\ref{tab:Q22}.

\label{Appendix: held-out}
\begin{table}[H]
  \caption{Mean $\mathcal{W}_1$ over held-out time points on EMT, EB, CITE, and Mouse datasets.}
  \label{tab:Q22}

  \begin{center}
    \begin{small}
      \begin{sc}
        \begin{tabular}{lcccc}
          \toprule
          \textbf{Method} & EMT (10D) & EB (50D) & CITE (50D) & Mouse (50D) \\
          \midrule
          MMFM      & 0.323                 & 11.213                & 38.521                 & 8.263 \\
          Metric FM & 0.314                 & 10.726                & 37.342                 & 7.753 \\
          SF2M      & 0.308\tiny$\pm$0.001   & 10.986\tiny$\pm$0.006  & 38.333\tiny$\pm$0.002   & 8.646\tiny$\pm$0.004 \\
          MIOFlow   & 0.325                 & 10.960                & 39.574                 & 7.779 \\
          TIGON     & 0.360                 & 11.080                & 38.159                 & 6.868 \\
          DeepRUOT  & 0.323\tiny$\pm$0.002   & \textbf{10.075}\tiny$\pm$0.004 & 37.892\tiny$\pm$0.002 & 6.847\tiny$\pm$0.003 \\
          Var-RUOT  & 0.320\tiny$\pm$0.003   & 11.035\tiny$\pm$0.017 & 38.393\tiny$\pm$0.029  & 8.672\tiny$\pm$0.040 \\
          UOT-FM    & 0.322                 & 11.344                & 38.649                 & 9.332 \\
          VGFM      & 0.301                 & 10.370                & 37.386                 & 8.496 \\
          WFR-FM    & \textbf{0.298}        & 10.157                & \underline{37.221}     & \textbf{6.586} \\
          \textbf{WFR-MFM} 
                    & \underline{0.299}     & \underline{10.135}     & \textbf{35.736}        & \underline{6.714} \\
          \bottomrule
        \end{tabular}
      \end{sc}
    \end{small}
  \end{center}

  \vskip -0.1in
\end{table}
\subsection{Performance on Simulation Gene Dataset}
Following the experimental setup in \citep{DeepRUOT}, we utilize a synthetic dataset representing a gene regulatory network. The temporal evolution of gene concentrations is modeled by a system of stochastic differential equations (SDEs) as follows:
$$\frac{dX_1}{dt} = \frac{\alpha_1 X_1^2 + \beta}{1 + \alpha_1 X_1^2 + \gamma_2 X_2^2 + \gamma_3 X_3^2 + \beta} - \delta_1 X_1 + \eta_1 \xi_t$$$$\frac{dX_2}{dt} = \frac{\alpha_2 X_2^2 + \beta}{1 + \gamma_1 X_1^2 + \alpha_2 X_2^2 + \gamma_3 X_3^2 + \beta} - \delta_2 X_2 + \eta_2 \xi_t$$$$\frac{dX_3}{dt} = \frac{\alpha_3 X_3^2}{1 + \alpha_3 X_3^2} - \delta_3 X_3 + \eta_3 \xi_t$$

In this system, $X_i(t)$ denotes the concentration level of the $i$-th gene. The network topology incorporates a mutual inhibition mechanism between genes $X_1$ and $X_2$, both of which exhibit self-activation capabilities. Furthermore, an external signal $\beta$ promotes the activation of $X_1$ and $X_2$, while gene $X_3$ acts as a repressor for both. The parameters $\alpha_i$, $\gamma_i$, and $\delta_i$ correspond to the rates of self-activation, cross-inhibition, and degradation, respectively, with $\eta_i \xi_t$ representing the stochastic noise component. 

To simulate population dynamics, we incorporate a probabilistic cell division process. The instantaneous growth rate $g$ is modulated by the expression level of $X_2$, defined as \(g = \alpha_g \frac{X_2^2}{1 + X_2^2}\). When division occurs, the parent cell's state is passed to the daughter cells with minor random perturbations. The dataset consists of snapshots taken at discrete time intervals $t \in \{0, 8, 16, 24, 32\}$, originating from two separate initial populations: one undergoing dynamic transition and growth, and the other maintaining a steady-state equilibrium.

\textbf{Choice of WFR-penalty $\delta$}. The hyperparameter $\delta$ serves as a weighting factor in the WFR metric, balancing the trade-off between the transport cost and the unbalanced mass variation cost (birth-death). As defined in equation  \ref{eq:wfr_dynamic}, a higher $\delta$ imposes a stronger penalty on the growth term $g$, thereby forcing the model to prioritize spatial transport over mass creation or annihilation. Table \ref{tab:sensitivity_delta} (evaluated on the Simulation Gene dataset) shows that performance is highly sensitive to $\delta$, with the optimum found at $\delta=1.5$. Increasing $\delta$ degrades results significantly; extreme values (e.g., $\delta=20$) cause drastic divergence ($\mathcal{W}_1 > 110$). This suggests the data involves significant mass variation, and excessively penalizing the birth-death term forces the model into erroneous pure-transport solutions. We set $\delta=1.5$ for the main experiments on this dataset.

\begin{table}[htbp]
    \centering
    \caption{Sensitivity analysis for parameter $\delta$ on Simulation Gene Dataset ($p_{\text{diff}}=0.6, \lambda=0.05$).}
    \label{tab:sensitivity_delta}
    {
    \begin{tabular}{lcccccccc}
        \toprule
        \textbf{Parameter} & \multicolumn{2}{c}{\textbf{t=1}} & \multicolumn{2}{c}{\textbf{t=2}} & \multicolumn{2}{c}{\textbf{t=3}} & \multicolumn{2}{c}{\textbf{t=4}} \\
        \cmidrule(lr){2-3} \cmidrule(lr){4-5} \cmidrule(lr){6-7} \cmidrule(lr){8-9}
         & $\mathcal{W}_1$ & TMV & $\mathcal{W}_1$ & TMV & $\mathcal{W}_1$ & TMV & $\mathcal{W}_1$ & TMV \\
        \midrule
        $\delta= 1$   & 0.0234 & 0.0061 & 0.0250 & 0.0086 & 0.0200 & 0.0060 & 0.0194 & 0.0057 \\
        $\delta= 1.5$ & 0.0224 & 0.0006 & 0.0219 & 0.0006 & 0.0212 & 0.0054 & 0.0208 & 0.0062 \\
        $\delta= 2$   & 0.0400 & 0.0005 & 0.0514 & 0.0067 & 0.0452 & 0.0133 & 0.0566 & 0.0133 \\
        $\delta= 5$   & 0.0469 & 0.0002 & 0.1579 & 0.0053 & 0.1251 & 0.0283 & 0.1576 & 0.0288 \\
        $\delta= 10$  & 0.0607 & 0.0001 & 0.1065 & 0.0114 & 0.1058 & 0.0218 & 0.2655 & 0.0164 \\
        $\delta= 20$  & 0.1099 & 0.0001 & 0.2002 & 0.0081 & 0.7320 & 0.0289 & 110.88 & 0.1213 \\
        \bottomrule
    \end{tabular}%
    }
\end{table}

\textbf{Choice of Cross-time Sampling Proportion $p_{\text{diff}}$}. The parameter $p_{\text{diff}}$ controls the probability of sampling different time pairs ($t \neq T$). A balanced $p_{\text{diff}}$ is essential: excessively low values restrict learning to instantaneous fields, failing to capture time-averaged dynamics, while overly high values may neglect immediate time-step reconstruction. Table \ref{tab:sensitivity_pdiff} presents the sensitivity analysis. We observe that a low $p_{\text{diff}}=0.1$ results in suboptimal performance with higher transport errors. However, performance improves significantly as $p_{\text{diff}}$ increases, reaching an optimal range between 0.4 and 0.6 where both Wasserstein distances and TMV are minimized. Notably, setting $p_{\text{diff}}$ too high (e.g., 0.8) leads to slight degradation in long-term accuracy ($t=3, 4$). Based on these results, we set $p_{\text{diff}}=0.6$ for the main experiments on this dataset.

\begin{table}[htbp]
    \centering
    \caption{Sensitivity analysis for parameter $p_{\text{diff}}$ on Simulation Gene Dataset ($\delta= 1.5$, $\lambda=0.05$).}
    \label{tab:sensitivity_pdiff}
    {
    \begin{tabular}{lcccccccc}
        \toprule
        \textbf{Parameter} & \multicolumn{2}{c}{\textbf{t=1}} & \multicolumn{2}{c}{\textbf{t=2}} & \multicolumn{2}{c}{\textbf{t=3}} & \multicolumn{2}{c}{\textbf{t=4}} \\
        \cmidrule(lr){2-3} \cmidrule(lr){4-5} \cmidrule(lr){6-7} \cmidrule(lr){8-9}
         & $\mathcal{W}_1$ & TMV & $\mathcal{W}_1$ & TMV & $\mathcal{W}_1$ & TMV & $\mathcal{W}_1$ & TMV \\
        \midrule
        $p_{\text{diff}}= 0.1$ & 0.024 & 0.0034 & 0.027 & 0.0038 & 0.029 & 0.0069 & 0.030 & 0.0135 \\
        $p_{\text{diff}}= 0.2$ & 0.023 & 0.0004 & 0.027 & 0.0038 & 0.026 & 0.0086 & 0.026 & 0.0121 \\
        $p_{\text{diff}}= 0.3$ & 0.023 & 0.0019 & 0.026 & 0.0051 & 0.024 & 0.0081 & 0.024 & 0.0096 \\
        $p_{\text{diff}}= 0.4$ & 0.022 & 0.0007 & 0.024 & 0.0028 & 0.024 & 0.0095 & 0.023 & 0.0125 \\
        $p_{\text{diff}}= 0.5$ & 0.023 & 0.0008 & 0.023 & 0.0030 & 0.022 & 0.0083 & 0.022 & 0.0092 \\
        $p_{\text{diff}}= 0.6$ & 0.021 & 0.0003 & 0.022 & 0.0010 & 0.020 & 0.0052 & 0.019 & 0.0061 \\
        $p_{\text{diff}}= 0.7$ & 0.023 & 0.0012 & 0.022 & 0.0011 & 0.023 & 0.0046 & 0.022 & 0.0075 \\
        $p_{\text{diff}}= 0.8$ & 0.022 & 0.0015 & 0.023 & 0.0022 & 0.025 & 0.0100 & 0.024 & 0.0118 \\
        \bottomrule
    \end{tabular}%
    }
\end{table}

\textbf{Choice of Loss Weight $\lambda$}. The parameter $\lambda$ balances the supervision between the average velocity field and the mass-growth field. An appropriate $\lambda$ is critical: a value too low weakens growth supervision, risking incorrect population estimation, while a value too high dominates the loss, sacrificing spatial transport accuracy. As shown in Table \ref{tab:sensitivity_lambda}, the model achieves optimal performance at $\lambda=0.05$, exhibiting the lowest transport errors ($\mathcal{W}_1$) and trajectory variance (TMV), particularly at later time steps ($t=3, 4$). Notably, the algorithm demonstrates significant robustness; performance metrics remain highly stable across a wide range of magnitudes ($\lambda \in [1, 50]$), indicating that the method is insensitive to hyperparameter tuning provided $\lambda$ is not negligible. Consequently, we fix $\lambda=0.05$ for the reported experiments.

\begin{table}[htbp]
    \centering
    \caption{Sensitivity analysis for parameter $\lambda$ on simulation gene dataset ($\delta= 1.5$, $p_{\text{diff}}=0.6$).}
    \label{tab:sensitivity_lambda}
    {%
    \begin{tabular}{lcccccccc}
        \toprule
        \textbf{Parameter} & \multicolumn{2}{c}{\textbf{t=1}} & \multicolumn{2}{c}{\textbf{t=2}} & \multicolumn{2}{c}{\textbf{t=3}} & \multicolumn{2}{c}{\textbf{t=4}} \\
        \cmidrule(lr){2-3} \cmidrule(lr){4-5} \cmidrule(lr){6-7} \cmidrule(lr){8-9}
         & $\mathcal{W}_1$ & TMV & $\mathcal{W}_1$ & TMV & $\mathcal{W}_1$ & TMV & $\mathcal{W}_1$ & TMV \\
        \midrule
        $\lambda = 0.01$ & 0.023 & 0.0005 & 0.022 & 0.0001 & 0.024 & 0.0081 & 0.023 & 0.0100 \\
        $\lambda = 0.05$ & 0.021 & 0.0003 & 0.022 & 0.0010 & 0.020 & 0.0052 & 0.019 & 0.0061 \\
        $\lambda = 0.1$  & 0.022 & 0.0010 & 0.022 & 0.0013 & 0.023 & 0.0085 & 0.022 & 0.0109 \\
        $\lambda = 1$    & 0.022 & 0.0004 & 0.023 & 0.0015 & 0.023 & 0.0076 & 0.022 & 0.0087 \\
        $\lambda = 10$   & 0.022 & 0.0006 & 0.022 & 0.0003 & 0.023 & 0.0076 & 0.022 & 0.0103 \\
        $\lambda = 50$   & 0.023 & 0.0012 & 0.022 & 0.0007 & 0.023 & 0.0073 & 0.022 & 0.0091 \\
        \bottomrule
    \end{tabular}%
    }
\end{table}

\subsection{Performance on Dyngen Dataset}
We adopt the unbalanced bifurcation simulation previously analyzed in \citep{mioflow} and \citep{wang2025joint}. Produced by Dyngen \citep{cannoodt2021spearheading}, this dataset comprises 728 cells with dimensionality reduced to 5 via PHATE \citep{moon2019visualizing}. The complexity of this data arises from two main factors: significant fluctuations in total mass over time, and a pronounced structural asymmetry, characterized by a much larger number of cells populating the lower branch compared to the upper one. We fix WFR penalty to $\delta = 1.5,$ cross-time sampling proportion to $p_{\text{diff}}=0.5$ and growth penalty to $\lambda=5$. As shown in Figure~\ref{fig:Dyngen_results}, WFR-FM accurately captures the underlying dynamics.

Detailed quantitative comparisons are provided in Table \ref{tab:Dyngen_metrics}. Our method yields highly competitive results compared to WFR-FM. Notably, we achieve the lowest $\mathcal{W}_1$ error at $t=3$ and consistently maintain the second-lowest errors across other time points for both $\mathcal{W}_1$ and RME.
\begin{figure}[H]
    \centering
        \includegraphics[width=\linewidth]{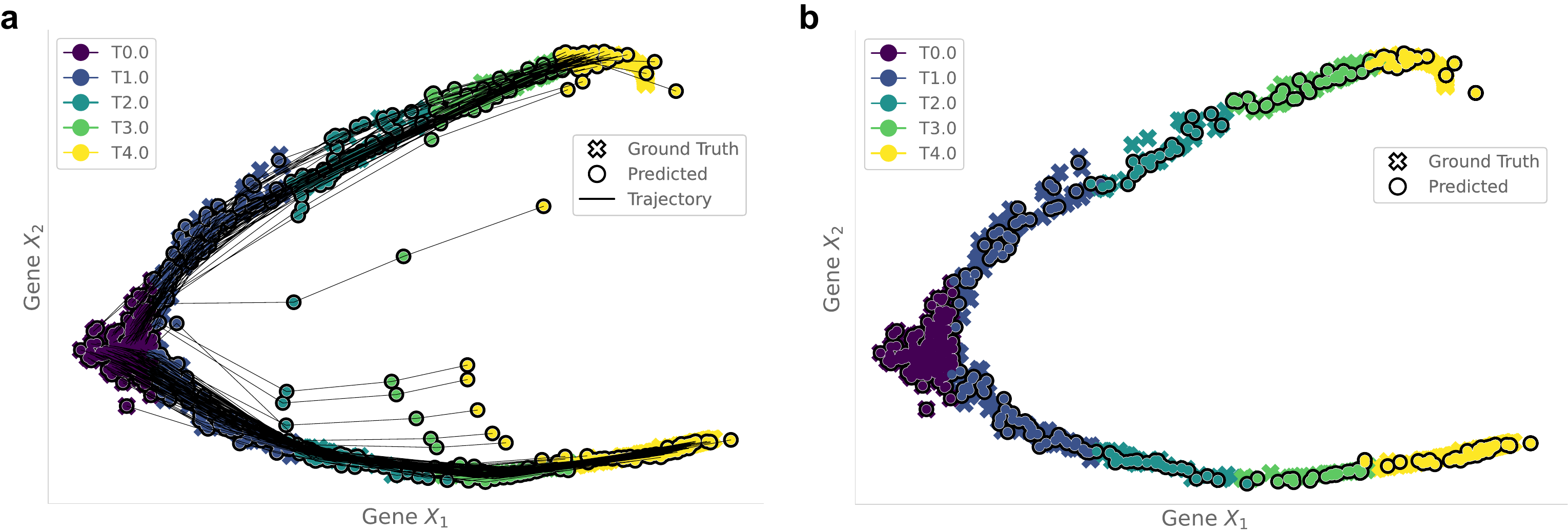}
\caption{
      \textbf{Learned dynamics on the Dyngen dataset.}
      (a) Trajectories of individual cells with a fixed cell count, where only cell weights evolve over time.
      (b) Population-level dynamics obtained by normalizing weights into probabilities and resampling cells, revealing changes in cell abundance.
    }\label{fig:Dyngen_results}
\end{figure}

\begin{table}[htbp]
\centering
\caption{Comparison of method performance over time on the Dyngen dataset. }
\label{tab:Dyngen_metrics}
{%
\begin{tabular}{lcccccccc}
\toprule
\textbf{Method} & \multicolumn{2}{c}{\textbf{t=1}} & \multicolumn{2}{c}{\textbf{t=2}} & \multicolumn{2}{c}{\textbf{t=3}} & \multicolumn{2}{c}{\textbf{t=4}} \\
\cmidrule(lr){2-3} \cmidrule(lr){4-5} \cmidrule(lr){6-7} \cmidrule(lr){8-9}
 & $\mathcal{W}_1$ & RME & $\mathcal{W}_1$ & RME & $\mathcal{W}_1$ & RME & $\mathcal{W}_1$ & RME \\
\midrule
MMFM       & 0.574 & ---   & 1.704 & ---   & 1.499 & ---   & 1.706 & ---   \\
Metric FM  & 0.892 & ---   & 2.347 & ---   & 2.030 & ---   & 1.799 & ---   \\
SF2M       & 0.637 & ---   & 1.266 & ---   & 1.415 & ---   & 1.790 & ---   \\
MIOFlow    & 0.420 & ---   & 0.640 & ---   & 1.537 & ---   & 1.263 & ---   \\
TIGON      & 0.446 & 0.033 & 0.584 & 0.060 & 0.415 & 0.023 & 0.603 & 0.071 \\
DeepRUOT   & 0.454 & 0.011 & 0.481 & 0.070 & 0.870 & 0.104 & 0.688 & 0.074 \\
Var-RUOT   & 0.315 & 0.128 & 0.548 & 0.336 & 0.630 & 0.222 & 0.593 & \underline{0.023} \\
UOT-FM     & 0.652 & 0.008 & 0.780 & 0.077 & 1.252 & 0.090 & 2.130 & 0.213 \\
VGFM       & 0.335 & \textbf{0.001} & 0.312 & 0.073 & 1.109 & 0.041 & 0.634 & 0.033 \\
WFR-FM     & \textbf{0.110} & \underline{0.003} & \textbf{0.098} & \textbf{0.007} & \underline{0.211} & \textbf{0.008} & \textbf{0.121} & \textbf{0.002} \\
WFR-MFM & \underline{0.130} & 0.007 & \underline{0.163} & \underline{0.009} & \textbf{0.202} & \underline{0.017} & \underline{0.197} & 0.111 \\
\bottomrule
\end{tabular}%
}
\end{table}

\subsection{Performance on Gaussian Mixture Dataset}
We evaluate our method on the 1000-dimensional Gaussian Mixture dataset adopted from ~\citep{wang2025joint}. Following their experimental setup, we generate an initial distribution of 500 samples (100 from an upper Gaussian component and 400 from a lower Gaussian component) and a target distribution of 1,400 samples (1,000 from the upper component and 200 from each of the two lower Gaussians). This configuration is designed to simulate unbalanced population dynamics, specifically modeling cell proliferation in the upper region, which serves as a robust benchmark for evaluating transport algorithms in high-dimensional settings.

We apply WFR-MFM to this task with hyperparameters set to $\delta = 1.4,$ $p_{\text{diff}}=0.05$ and $\lambda=1$. As illustrated in Figure~\ref{fig:gaussian_mixture}, our method successfully captures the underlying dynamics.

\begin{figure}[h]
    \centering
        \includegraphics[width=\linewidth]{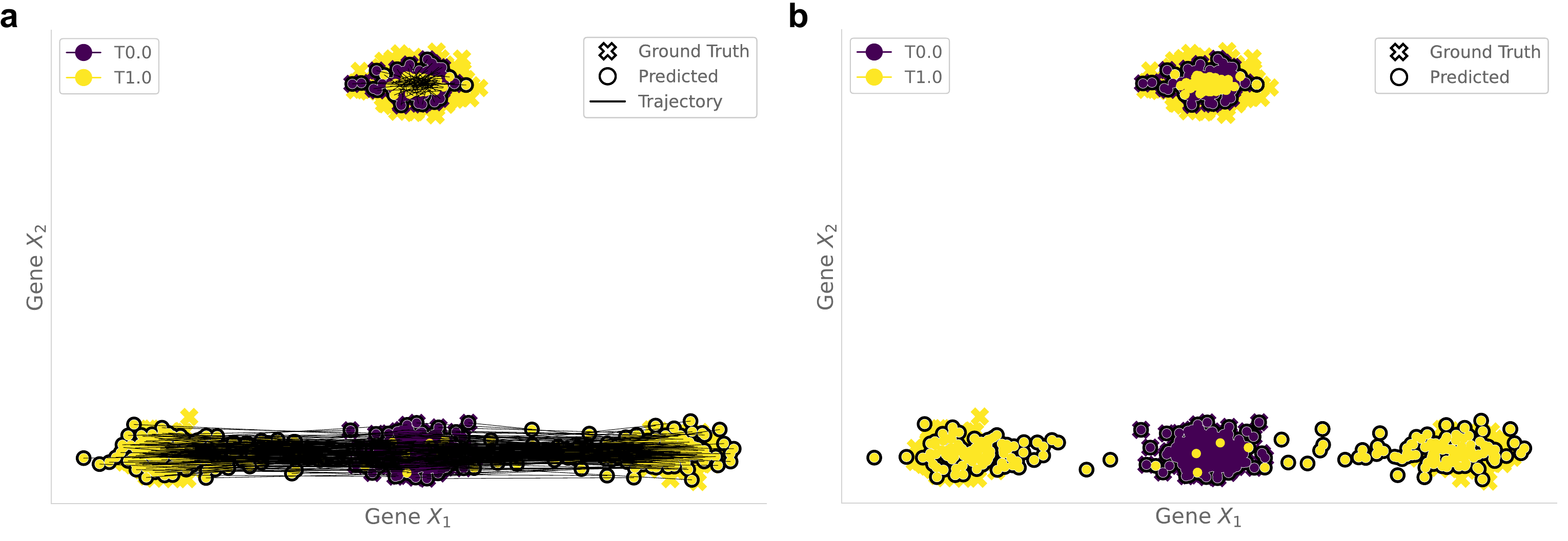}
\caption{
      \textbf{Learned dynamics on the Gaussian 1000D dataset.}
      (a) Trajectories of individual cells with a fixed cell count, where only cell weights evolve over time.
      (b) Population-level dynamics obtained by normalizing weights into probabilities and resampling cells, revealing changes in cell abundance.
    }        \label{fig:gaussian_mixture}
\end{figure}

\subsection{Performance on Epithelial--Mesenchymal Transition Dataset}
We utilize the single-cell dataset capturing the epithelial-mesenchymal transition (EMT) in A549 lung cancer cells, originally collected by \citep{cook2020context}. This dataset comprises samples taken at four distinct time points during the transition process. Consistent with the preprocessing steps outlined in \citep{TIGON}, the data dimensionality was reduced to 10 using an AutoEncoder.

We evaluate WFR-MFM on this dataset with hyperparameters set to  $\delta = 2,$ $p_{\text{diff}}=0.05$ and $\lambda=20$. The quantitative results are summarized in Table~\ref{tab:emt}. WFR-MFM demonstrates highly competitive performance across all time intervals. In the distribution matching task (measured by $\mathcal{W}_1$), our method consistently ranks as the second-best approach, outperforming other unbalanced transport baselines such as VGFM and UOT-FM, and trailing only the WFR-FM benchmark. Furthermore, WFR-MFM maintains a low RME, indicating its robustness in modeling the mass variation inherent in the EMT process where cells exhibit enhanced stemness and proliferation. 

\begin{table}[t]
\centering
\caption{Comparison of method performance over time on the 10D EMT dataset. }
\label{tab:emt}
\begin{tabular}{lcccccc}
\toprule
\textbf{Method} & \multicolumn{2}{c}{\textbf{t=1}} & \multicolumn{2}{c}{\textbf{t=2}} & \multicolumn{2}{c}{\textbf{t=3}} \\
\cmidrule(lr){2-3} \cmidrule(lr){4-5} \cmidrule(lr){6-7}
& $\mathcal{W}_1$ & RME & $\mathcal{W}_1$ & RME & $\mathcal{W}_1$ & RME \\
\midrule
MMFM & 0.2576 & --- & 0.2874 & --- & 0.3102 & --- \\
Metric FM & 0.2605 & --- & 0.2971 & --- & 0.3050 & --- \\
SF2M & 0.2566 & --- & 0.2811 & --- & 0.2900 & --- \\
MIOFlow & 0.2439 & --- & 0.2665 & --- & 0.2841 & --- \\
TIGON & 0.2433 & \underline{0.002} & 0.2661 & \underline{0.003} & 0.2847 & \textbf{0.001} \\
DeepRUOT & 0.2902 & \textbf{0.001} & 0.3193 & 0.011 & 0.3291 & \underline{0.002} \\
Var-RUOT & 0.2540 & 0.075 & 0.2670 & 0.014 & 0.2683 & 0.041 \\
UOT-FM & 0.2538 & \underline{0.002} & 0.2696 & 0.013 & 0.2771 & 0.010 \\
VGFM & 0.2350 & 0.016 & 0.2420 & 0.011 & 0.2450 & 0.018 \\
WFR-FM & \textbf{0.2099} & \textbf{0.001} & \textbf{0.2272} & \textbf{0.002} & \textbf{0.2346} & \textbf{0.001} \\
WFR-MFM & \underline{0.2250} & 0.005 & \underline{0.2376} & \underline{0.003} & \underline{0.2440} & 0.003 \\
\bottomrule
\end{tabular}
\end{table}

\subsection{Performance on Embryoid Bodies Dataset}
\label{Appendix:EB}
Our study employs the human embryoid body (EB) differentiation dataset from \citep{moon2019visualizing}, which captures 16,819 cells sampled at five intervals over a 27-day period to model early development. To prepare the data for trajectory inference, we adopt the dimensionality reduction strategy used in \citep{wang2025joint}, compressing the original gene expression space using Principal Component Analysis (PCA). This preprocessed, lower-dimensional representation serves as the direct input for our WFR-MFM algorithm.

\paragraph{Large-scale Experiment.}
Figure~\ref{fig:compare} provides complementary views of the efficiency--accuracy trade-offs on the 100D EB dataset,
illustrating the relationships between predictive accuracy ($\mathcal{W}_1$ distance) and training time,
as well as inference time, with color indicating GPU memory usage.

\begin{figure}[t]
  \centering
  \begin{subfigure}[t]{0.48\columnwidth}
    \centering
    \includegraphics[width=\linewidth]{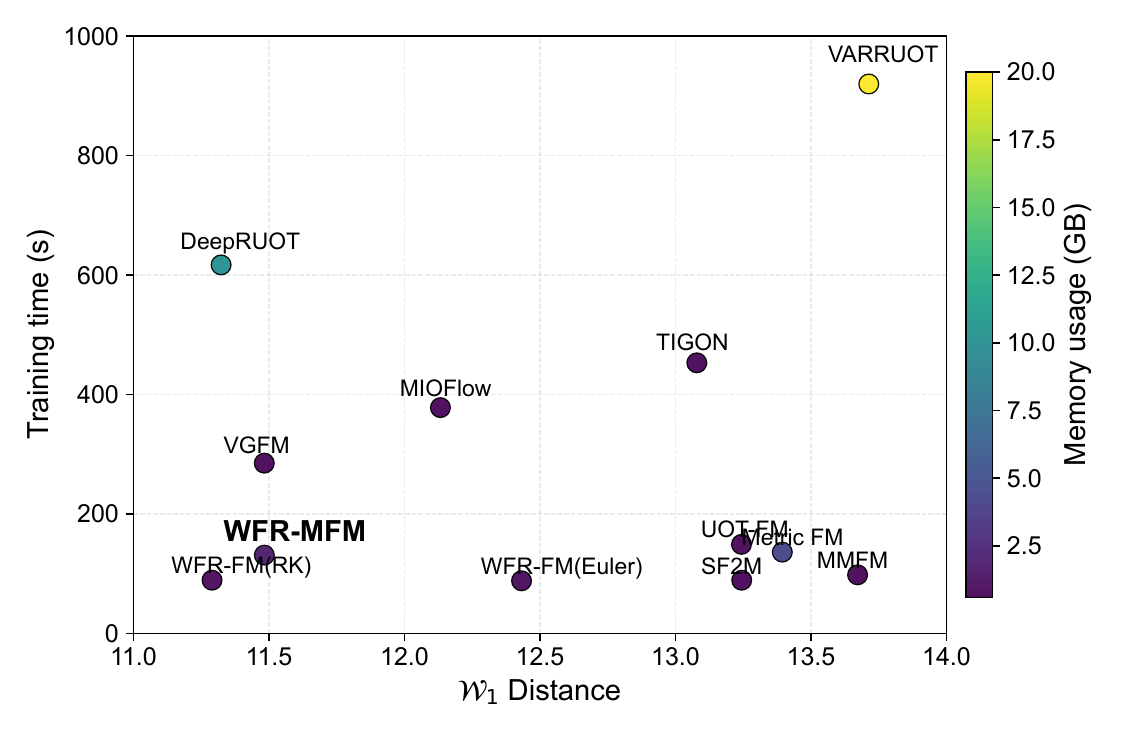}
    \caption{}
  \end{subfigure}
  \hfill
  \begin{subfigure}[t]{0.48\columnwidth}
    \centering
    \includegraphics[width=\linewidth]{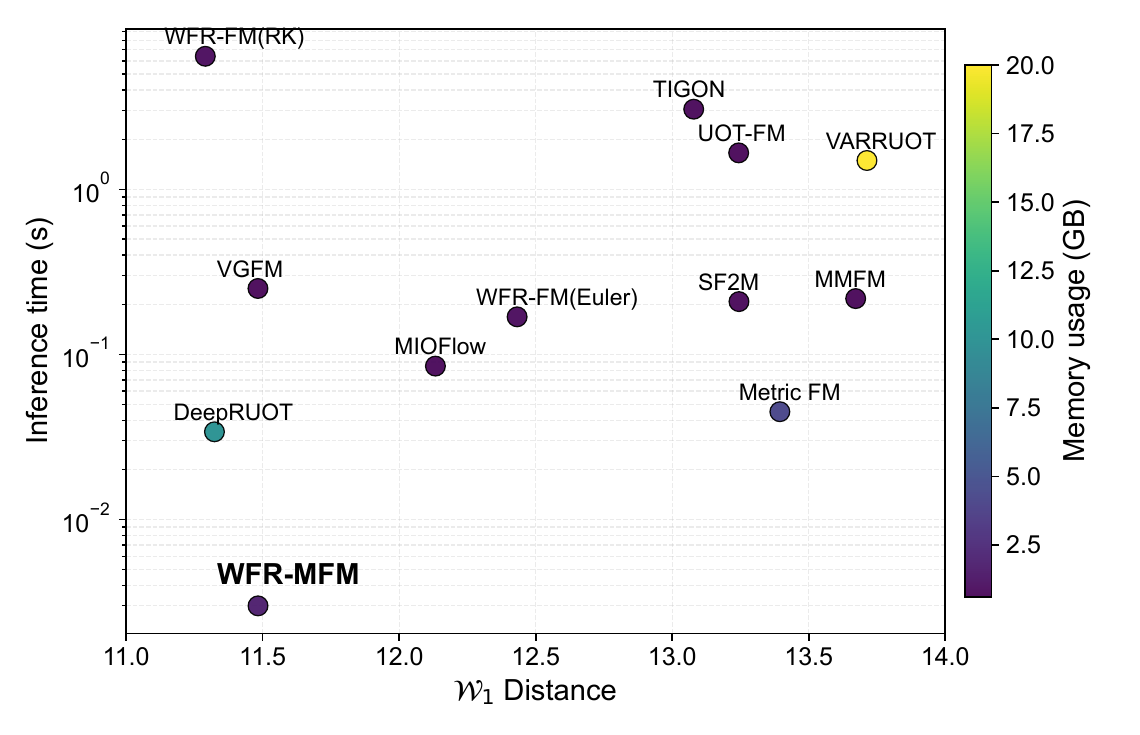}
    \caption{}
  \end{subfigure}
\caption{
  \textbf{Efficiency on the 100D EB dataset.}
  Methods are compared in terms of predictive accuracy ($\mathcal{W}_1$ distance)
  and computational efficiency, including training cost and inference latency,
  with color indicating GPU memory usage.
  \textit{WFR-FM(RK)} and \textit{WFR-FM(Euler)} correspond to inference using an adaptive Dormand--Prince RK5(4) solver and an explicit Euler solver (100 steps), respectively.
}
  \label{fig:compare}
\end{figure}

\paragraph{Efficiency of Mini-batch WFR-OET.}
We utilized a mini-batch strategy to improve training efficiency. Table~\ref{tab:minibatch} shows that this method significantly cuts down training time with negligible impact on accuracy. The model demonstrates robustness across batch sizes, with the 1500-batch setting achieving an even better RME (0.009) than the full-batch baseline (0.012). Therefore, we selected 1500 as the optimal batch size, providing the best combination of low training cost, high mass conservation, and reliable $\mathcal{W}_1$ performance.

\begin{table}[t]
\centering
\caption{Sensitivity analysis for batch size of mini-batch WFR-MFM on the 100D EB dataset. The time reported represents the total training time.}
\label{tab:minibatch}
 \resizebox{\textwidth}{!}{%
\begin{tabular}{lccccccccccc} 
\toprule
\textbf{Batch Size} & \multicolumn{2}{c}{\textbf{t=1}} & \multicolumn{2}{c}{\textbf{t=2}} & \multicolumn{2}{c}{\textbf{t=3}} & \multicolumn{2}{c}{\textbf{t=4}} & \multicolumn{2}{c}{\textbf{Average}} & \textbf{Train Time (s)} \\
\cmidrule(lr){2-3} \cmidrule(lr){4-5} \cmidrule(lr){6-7} \cmidrule(lr){8-9} \cmidrule(lr){10-11} 
 & $\mathcal{W}_1$ & RME & $\mathcal{W}_1$ & RME & $\mathcal{W}_1$ & RME & $\mathcal{W}_1$ & RME & $\mathcal{W}_1$ & RME &  \\
 \midrule
500  & 10.072 & 0.007 & 11.248 & 0.005 & 11.748 & 0.004 & 12.815 & 0.009 & 11.471 & 0.006 & 136\\
1000 & 10.025 & 0.008 & 11.294 & 0.011 & 11.803 & 0.007 & 12.831 & 0.018 & 11.488 & 0.011 & 131\\
1500 & 10.020 & 0.007 & 11.274 & 0.007 & 11.779 & 0.006 & 12.859 & 0.015 & 11.483 & 0.009 & 131\\
2000 & 10.050 & 0.008 & 11.289 & 0.012 & 11.858 & 0.011 & 12.926 & 0.018 & 11.531 & 0.012 & 135\\
3000 & 9.999  & 0.008 & 11.283 & 0.009 & 11.792 & 0.012 & 12.892 & 0.022 & 11.492 & 0.013 & 151\\
\midrule
w/o mini-batch & 9.995 & 0.009 & 11.222 & 0.012 & 11.720 & 0.010 & 12.782 & 0.017 & 11.430 & 0.012 & 295\\
\bottomrule
\end{tabular}%
}
\end{table}

\paragraph{Scalability of WFR-MFM across Dimensions.}
Utilizing the optimal mini-batch size of 1500 established previously, we evaluated the scalability of WFR-MFM by comparing it against baseline methods across varying data dimensions: 5 (Table~\ref{tab:5d_eb}), 50 (Table~\ref{tab:50d_eb}), and 100 (Table~\ref{tab:100d_eb}). In these experiments, we fixed $\delta = 25$ and $p_{\text{diff}}=0.6$, with $\lambda$ adjusted to 50, 100, and 1, respectively. The results indicate that WFR-MFM consistently outperforms competing approaches at the majority of time points across all datasets. 

\begin{table}[H]
    \centering
    \caption{Comparison of method performance over time on the 5D EB dataset. }
    \label{tab:5d_eb}
   {%
    \begin{tabular}{lcccccccc}
        \toprule
        \multirow{2}{*}{\textbf{Method}} & \multicolumn{2}{c}{\textbf{t=1}} & \multicolumn{2}{c}{\textbf{t=2}} & \multicolumn{2}{c}{\textbf{t=3}} & \multicolumn{2}{c}{\textbf{t=4}} \\
        \cmidrule(lr){2-3} \cmidrule(lr){4-5} \cmidrule(lr){6-7} \cmidrule(lr){8-9}
        & $\mathcal{W}_1$ & RME & $\mathcal{W}_1$ & RME & $\mathcal{W}_1$ & RME & $\mathcal{W}_1$ & RME \\
        \midrule
        MMFM       & 0.477 & ---   & 0.554 & ---   & 0.781 & ---   & 0.872 & --- \\
        Metric FM  & 0.449 & ---   & 0.552 & ---   & 0.583 & ---   & 0.597 & --- \\
        SF2M       & 0.556 & ---   & 0.715 & ---   & 0.750 & ---   & 0.650 & --- \\
        MIOFlow    & 0.442 & ---   & 0.585 & ---   & 0.651 & ---   & 0.670 & --- \\
        TIGON      & 0.386 & \underline{0.002} & 0.502 & 0.015 & 0.602 & 0.021 & 0.600 & 0.027 \\
        DeepRUOT   & 0.386 & 0.005 & 0.497 & 0.017 & 0.591 & 0.021 & 0.585 & 0.030 \\
        Var-RUOT   & 0.416 & 0.111 & 0.486 & 0.144 & 0.509 & 0.054 & 0.511 & 0.022 \\
        UOT-FM     & 0.544 & 0.032 & 0.670 & 0.029 & 0.729 & 0.016 & 0.852 & 0.041 \\
        VGFM       & 0.402 & 0.046 & 0.494 & 0.018 & 0.525 & 0.035 & 0.573 & 0.021 \\
        WFR-FM     & \textbf{0.324} & 0.003 & \textbf{0.401} & \textbf{0.001} & \textbf{0.431} & \underline{0.005} & \underline{0.510} & \underline{0.005} \\
        WFR-MFM    & \underline{0.356} & \textbf{0.0001}& \underline{0.438} & \underline{0.010} & \underline{0.477} & \textbf{0.0002}& \textbf{0.502} & \textbf{0.001} \\
        \bottomrule
    \end{tabular}
    }
\end{table}

\begin{table}[H]
    \centering
    \caption{Comparison of method performance over time on the 50D EB dataset.}
    \label{tab:50d_eb}
    {%
    \begin{tabular}{lcccccccc}
        \toprule
        \multirow{2}{*}{\textbf{Method}} & \multicolumn{2}{c}{\textbf{t=1}} & \multicolumn{2}{c}{\textbf{t=2}} & \multicolumn{2}{c}{\textbf{t=3}} & \multicolumn{2}{c}{\textbf{t=4}} \\
        \cmidrule(lr){2-3} \cmidrule(lr){4-5} \cmidrule(lr){6-7} \cmidrule(lr){8-9}
        & $\mathcal{W}_1$ & RME & $\mathcal{W}_1$ & RME & $\mathcal{W}_1$ & RME & $\mathcal{W}_1$ & RME \\
        \midrule
        MMFM       & 9.124 & ---   & 10.474 & ---   & 11.022 & ---   & 11.480 & --- \\
        Metric FM  & 8.506 & ---   & 9.795  & ---   & 10.621 & ---   & 12.042 & --- \\
        SF2M       & 9.247 & ---   & 10.882 & ---   & 11.650 & ---   & 12.154 & --- \\
        MIOFlow    & 8.447 & ---   & 9.229  & ---   & 9.436  & ---   & 10.123 & --- \\
        TIGON      & 8.433 & 0.067 & 9.275  & 0.022 & 9.802  & 0.179 & 10.148 & 0.101 \\
        DeepRUOT   & 8.169 & \underline{0.003} & 9.049  & 0.038 & 9.378  & 0.088 & 9.733  & \underline{0.004} \\
        Var-RUOT   & 9.442 & 0.128 & 9.709  & 0.081 & 10.482 & 0.031 & 10.735 & 0.030 \\
        UOT-FM     & 8.717 & 0.063 & 10.858 & 0.009 & 11.813 & 0.022 & 12.733 & 0.018 \\
        VGFM       & 7.951 & 0.089 & 8.747  & 0.042 & 9.244  & 0.019 & \underline{9.620}  & 0.044 \\
        WFR-FM     & \underline{7.664} & 0.008 & \underline{8.659}  & \underline{0.006} & \underline{9.182}  & \textbf{0.004} & 9.914  & \underline{0.004} \\
        WFR-MFM    & \textbf{5.236} & \textbf{0.002} & \textbf{5.904} & \textbf{0.005} & \textbf{6.190} & \underline{0.006} & \textbf{6.647} & \textbf{0.001} \\
        \bottomrule
    \end{tabular}
    }
\end{table}

\begin{table}[H]
    \centering
    \caption{Comparison of method performance over time on the 100D EB dataset. }
    \label{tab:100d_eb}
   {%
    \begin{tabular}{lcccccccc}
        \toprule
        \multirow{2}{*}{\textbf{Method}} & \multicolumn{2}{c}{\textbf{t=1}} & \multicolumn{2}{c}{\textbf{t=2}} & \multicolumn{2}{c}{\textbf{t=3}} & \multicolumn{2}{c}{\textbf{t=4}} \\
        \cmidrule(lr){2-3} \cmidrule(lr){4-5} \cmidrule(lr){6-7} \cmidrule(lr){8-9}
        & $\mathcal{W}_1$ & RME & $\mathcal{W}_1$ & RME & $\mathcal{W}_1$ & RME & $\mathcal{W}_1$ & RME \\
        \midrule
        MMFM       & 11.460 & ---   & 13.879 & ---   & 14.441 & ---   & 14.907 & --- \\
        Metric FM  & 10.806 & ---   & 12.348 & ---   & 13.622 & ---   & 16.801 & --- \\
        SF2M       & 11.333 & ---   & 12.982 & ---   & 13.718 & ---   & 14.945 & --- \\
        MIOFlow    & 11.387 & ---   & 12.331 & ---   & 11.905 & ---   & 12.908 & --- \\
        TIGON      & 10.547 & 0.014 & 12.926 & 0.052 & 13.897 & 0.107 & 14.945 & 0.096 \\
        DeepRUOT   & 10.256 & \textbf{0.002} & \underline{11.103} & 0.074 & \underline{11.529} & 0.136 & \textbf{12.406} & 0.047 \\
        Var-RUOT   & 11.746 & 0.091 & 12.237 & 0.024 & 12.957 & 0.150 & 13.335 & 0.074 \\
        UOT-FM     & 10.757 & 0.056 & 12.799 & 0.037 & 13.761 & 0.044 & 15.657 & 0.022 \\
        VGFM       & 10.313 & 0.048 & 11.278 & 0.035 & 11.703 & 0.028 & \underline{12.637} & 0.066 \\
        WFR-FM     & \textbf{9.941} & 0.009 & \textbf{11.040} & \textbf{0.006} & \textbf{11.516} & \underline{0.008} & 12.664 & \textbf{0.005} \\
        WFR-MFM    & \underline{10.020} & \underline{0.007} & 11.274 & \underline{0.007} & 11.779 & \textbf{0.006} & 12.859 & \underline{0.015} \\
        \bottomrule
    \end{tabular}
    }
\end{table}

\subsection{Performance on CITE-seq Dataset}

We further evaluated our method on the CITE-seq dataset~\citep{lance2022multimodal}, comprising 31,240 cells collected over four time points. Following the preprocessing steps in~\citep{wang2025joint}, we utilized the gene expression matrix reduced to 50 dimensions via PCA. The experiments were conducted with a batch size of 1,500, setting the hyperparameters to $\delta = 30$, $p_{\text{diff}}=0.3$, and $\lambda=1$. As presented in Table~\ref{tab:cite}, although the one-step implementation (WFR-MFM) yields suboptimal performance, increasing the number of inference steps leads to significant improvements. By adjusting the inference to 10 steps, our method achieves state-of-the-art results at $t=1$ for both $\mathcal{W}_1$ and RME, and secures the best mass estimation accuracy at $t=2$. This demonstrates that while a coarse integration may be insufficient for complex dynamics, a multi-step scheme effectively unlocks the method's potential, yielding satisfactory distribution matching and population growth modeling.

\begin{table}[t]
\centering
\caption{Comparison of method performance over time on the 50D CITE dataset.}
\label{tab:cite}
{%
\begin{tabular}{lcccccc}
\toprule
\textbf{Method} & \multicolumn{2}{c}{\textbf{t=1}} & \multicolumn{2}{c}{\textbf{t=2}} & \multicolumn{2}{c}{\textbf{t=3}} \\
\cmidrule(lr){2-3} \cmidrule(lr){4-5} \cmidrule(lr){6-7}
 & $\mathcal{W}_1$ & RME & $\mathcal{W}_1$ & RME & $\mathcal{W}_1$ & RME \\
\midrule
MMFM        & 33.971 & ---   & 36.854 & ---   & 43.721 & ---   \\
Metric FM   & 28.314 & ---   & 28.617 & ---   & 33.212 & ---   \\
SF2M        & 29.543 & ---   & 32.655 & ---   & 36.265 & ---   \\
MIOFlow     & 28.290 & ---   & 28.524 & ---   & \textbf{32.230} & ---   \\
TIGON       & 28.196 & 0.186 & 27.921 & 0.545 & \underline{32.846} & 0.653 \\
DeepRUOT    & 28.245 & 0.168 & \underline{27.908} & 0.525 & 32.950 & 0.634 \\
Var-RUOT    & 30.219 & 0.331 & 32.702 & 0.325 & 40.613 & 0.486 \\
UOT-FM      & 33.531 & \underline{0.009} & 32.795 & 0.046 & 49.751 & 0.097 \\
VGFM        & 29.449 & 0.020 & 29.722 & 0.057 & 33.752 & \textbf{0.001} \\
WFR-FM      & \underline{27.831} & 0.043 & \textbf{27.478} & \underline{0.045} & 34.784 & \underline{0.022} \\

WFR-MFM & 30.841 & 0.150 & 29.390 & 0.187 & 40.471 & 0.427 \\
WFR-MFM(10 steps)& \textbf{27.509} & \textbf{0.006} & 28.255 & \textbf{0.033} & 34.055 & 0.062 \\

\bottomrule
\end{tabular}%
}
\end{table}

\subsection{Performance on Mouse Hematopoiesis Dataset}
We further validated our method on the mouse blood hematopoiesis dataset~\citep{weinreb2020lineage}, comprising 49,302 lineage-traced cells at three time points. Following PCA reduction to 50 dimensions, we applied our method with a batch size of 1,500 and fixed hyperparameters ($\delta = 15, p_{\text{diff}}=0.6, \lambda=50$). As detailed in Table~\ref{tab:mouse}, WFR-MFM yields highly competitive results: it secures the top performance for RME at $t=1$ and $\mathcal{W}_1$ at $t=2$, and remains the runner-up in other scenarios. These results highlight the method's reliability in capturing the dynamics of hematopoiesis. 

\begin{table}[H]
    \centering
    \caption{Comparison of method performance over time on the 50D Mouse dataset. }
    \label{tab:mouse}
    {%
    \begin{tabular}{lcccc}
        \toprule
        \multirow{2}{*}{\textbf{Method}} & \multicolumn{2}{c}{\textbf{t=1}} & \multicolumn{2}{c}{\textbf{t=2}} \\
        \cmidrule(lr){2-3} \cmidrule(lr){4-5} 
        & $\mathcal{W}_1$ & RME & $\mathcal{W}_1$ & RME \\
        \midrule
        MMFM       & 7.647 & ---   & 10.156 & ---   \\
        Metric FM  & 7.788 & ---   & 11.449 & ---   \\
        SF2M       & 8.217 & ---   & 11.086 & ---   \\
        MIOFlow    & 6.313 & ---   & 6.746  & ---   \\
        TIGON      & 6.140 & 0.382 & 6.973  & 0.326 \\
        DeepRUOT   & 6.052 & 0.062 & 6.757  & 0.041 \\
        Var-RUOT   & 7.951 & 0.131 & 10.862 & 0.154 \\
        UOT-FM     & 8.114 & 0.035 & 9.170  & \textbf{0.011} \\
        VGFM       & 6.274 & 0.076 & 6.796  & 0.070 \\
        WFR-FM     & \textbf{5.486} & \underline{0.012} & \underline{6.211} & \textbf{0.011} \\
        WFR-MFM    & \underline{5.925} & \textbf{0.009} & \textbf{5.548} & \underline{0.023} \\
        \bottomrule
    \end{tabular}
    }
\end{table}

\subsection{Performance on Simulation Perturbation Dataset}\label{sim perturb}

In this section, we simulate a synthetic gene regulatory network for a perturbation experiment. The dynamics of the system are governed by the following set of SDEs:
\begin{equation*}
\begin{gathered}
\frac{dX_1}{dt} = \frac{\rho_{1,c} + \alpha_1 X_1^2}{1 + \alpha_1 X_1^2 + \gamma_2 X_2^2 + \gamma_3 X_3^2} - \delta_1 X_1 + \eta_1 \xi_t, \\
\frac{dX_2}{dt} = \frac{\rho_{2,c} + \alpha_2 X_2^2}{1 + \gamma_1 X_1^2 + \alpha_2 X_2^2 + \gamma_3 X_3^2} - \delta_2 X_2 + \eta_2\xi_t, \\
\frac{dX_3}{dt} = \frac{\rho_3 + \alpha_3 X_3^2}{1 + \alpha_3 X_3^2} - \delta_3 X_3 + \eta_3 \xi_t.	
\end{gathered}
\end{equation*}

Here, \( X_i(t) \) represents the concentration of gene \( i \). The model features a toggle switch between \( X_1 \) and \( X_2 \) (mutual inhibition and self-activation), where both are further regulated by \( X_3 \) and an external signal \( \beta \). The equations are parameterized by rates for basic transcription (\( \rho_{i,\cdot} \)), self-activation (\( \alpha_i \)), inhibition (\( \gamma_i \)), and degradation (\( \delta_i \)), with an additional stochastic noise term (\( \eta_i \xi_t \)).
In addition, we incorporate probabilistic cell death, which is dependent on the expression of \( X_3 \). The instantaneous death rate \( g \) is defined as
\begin{equation*}
g = \alpha_{g,c} \frac{X_3^2}{1 + X_3^2}.
\end{equation*}
which depends on condition-specific parameter \(\alpha_{g,c}\). Each perturbation condition \(c\) corresponds to a triplet of parameters \((\rho_{1,c},\rho_{2,c},\alpha_{g,c})\), randomly sampled from predefined ranges. From the full parameter space, we generate \(5100\) perturbation conditions. A subset of \(100\) conditions is selected to form the training set, and the remaining \(5000\) are reserved for evaluation. This construction emulates realistic experimental settings where only a limited set of perturbations is experimentally measured, and the goal is to infer system behavior under a much larger number of unseen perturbations.

In this dataset, for each perturbation condition $c$, we set the WFR mass variation penalty
$\delta_c = N_c / N_{\mathrm{ctrl}}$, where $N_c$ and $N_{\mathrm{ctrl}}$ denote the numbers of cells under condition $c$ and the unperturbed condition, respectively. The proportion of cross-time sampling is fixed to $p_{\mathrm{diff}} = 0.5$, and the mass-growth regularization coefficient is set to $\lambda = 0.1$.

\end{document}